\newtheorem{thm}{Theorem}
\newtheorem{lem}{Lemma}
\newtheorem{cor}{Corollary}
\theoremstyle{remark}
\theoremstyle{definition}
\newtheorem{asmp}{Assumption}
\newcommand{\carlee}[1]{{\bf \color{red} Carlee: #1}}
\newcommand{\ubar}[1]{\underaccent{\bar}{#1}}
\newcommand{\ie}{{\em i.e.}}
\newcommand{\eg}{{\em e.g.}}
\newcommand{\expect}{\mathbb{E}\expectarg}
\DeclarePairedDelimiterX{\expectarg}[1]{[}{]}{%
	\ifnum\currentgrouptype=16 \else\begingroup\fi
	\activatebar#1
	\ifnum\currentgrouptype=16 \else\endgroup\fi
}
\newcommand{\innermid}{\nonscript\;\delimsize\vert\nonscript\;}
\newcommand{\activatebar}{%
	\begingroup\lccode`\~=`\|
	\lowercase{\endgroup\let~}\innermid 
	\mathcode`|=\string"8000
}
\DeclarePairedDelimiter\ceil{\lceil}{\rceil}
\DeclarePairedDelimiter\floor{\lfloor}{\rfloor}
\DeclareMathOperator*{\argmin}{arg\,min}
\newtheorem{remark}{Remark}
\def\BibTeX{{\rm B\kern-.05em{\sc i\kern-.025em b}\kern-.08em
    T\kern-.1667em\lower.7ex\hbox{E}\kern-.125emX}}
\begin{document}

\title{Machine Learning on Volatile Instances\\
}

\author{\IEEEauthorblockN{
        Xiaoxi Zhang, Jianyu Wang, Gauri Joshi, Carlee Joe-Wong}
        \IEEEauthorblockA{
        Department of Electrical and Computer Engineering, Carnegie Mellon University, USA
        }
        \IEEEauthorblockA{
        Email:\{xiaoxiz2, jianyuw1, gaurij, cjoewong\}@andrew.cmu.edu
        }
}

\maketitle

\begin{abstract}
Due to the massive size of the neural network models and training datasets used in machine learning today, it is imperative to distribute stochastic gradient descent (SGD) by splitting up tasks such as gradient evaluation across multiple worker nodes. However, running distributed SGD can be prohibitively expensive because it may require specialized computing resources such as GPUs for extended periods of time. We propose cost-effective strategies to exploit volatile cloud instances that are cheaper than standard instances, but may be interrupted by higher priority workloads. To the best of our knowledge, this work is the first to quantify how variations in the number of active worker nodes (as a result of preemption) affects SGD convergence and the time to train the model. By understanding these trade-offs between preemption probability of the instances, accuracy, and training time, we are able to derive practical strategies for configuring distributed SGD jobs on volatile instances such as Amazon EC2 spot instances and other preemptible cloud instances. 
Experimental results show that our strategies achieve good training performance at substantially lower cost. 
\end{abstract}

\begin{IEEEkeywords}
Machine learning, Stochastic Gradient Descent, volatile cloud instances, bidding strategies
\end{IEEEkeywords}

\section{Introduction}
\label{sec:intro}


Stochastic gradient descent (SGD) is the core algorithm used by most state-of-the-art machine learning (ML) problems today~\cite{SGD1951,sgd-ls,parallel-training-ls-distbelief}. Yet as ever more complex models are trained on ever larger amounts of data, most SGD implementations have been forced to distribute the task of computing gradients across multiple ``worker'' nodes, thus reducing the computational burden on any single node while speeding up the model training through parallelization. Currently, even distributed training jobs require high-performance computing infrastructure such as GPUs to finish in a reasonable amount of time. However, purchasing GPUs outright is expensive and requires intensive setup and maintenance. Renting such machines as on-demand instances from services like Amazon EC2 can reduce setup costs, but may still be prohibitively expensive since distributed training jobs can take hours or even days to complete.

A common way to save money on cloud instances is to utilize \emph{volatile}, or \emph{transient}, instances, which have lower prices but experience interruptions~\cite{spot,gcp,low_priority_azure}. Examples of such instances include Google Cloud Platform's preemptible instances~\cite{gcp} and Azure's low-priority virtual machines~\cite{low_priority_azure}; both give users access to virtual machines that can be preempted at any time, but charge a significantly lower hourly price than on-demand instances with availability guarantees. Amazon EC2's spot instances offer a similar service, but provide users additional flexibility by dynamically changing the price charged for using spot instances. Users can then specify the maximum price they are willing to pay, and they do not receive access to the instance when the prevailing spot price exceeds their specified maximum price~\cite{aws-fleet}. 
Volatile computing resources may also be used to train ML jobs outside of traditional cloud contexts, e.g., in datacenters that run on ``stranded power.'' Such datacenters only activate instances when the energy network supplying power to the datacenter has excess energy that needs to be burned off~\cite{yang2016zccloud,chien2016characterizing}, leading to significant temporal volatility in resource availability. SGD variants are also commonly used to train machine learning models in edge computing contexts, where resource volatility is a significant practical challenge~\cite{konevcny2016federated,tao2018esgd}.

SGD algorithms can be run on volatile instances by deploying each worker on a single instance, and deploying a parameter server on an on-demand or reserved instance that is never interrupted~\cite{BidCloud}. This deployment strategy, however, has drawbacks: since the workers may be interrupted throughout the training process, they cannot update the model parameters as frequently, increasing the error of the trained model compared to deploying workers on on-demand instances. Compensating for this increased error would require either training the model for a larger number of iterations or increasing the number of provisioned workers, both of which will increase the training cost. In this paper, we \emph{quantify the performance tradeoffs between error, cost, and training time for volatile instances}. We then use our analysis to propose \emph{practical strategies for optimizing these tradeoffs} in realistic preemption environments. We first consider Amazon spot instances, for which users can indirectly control their preemptions by setting maximum bids, and derive the resulting optimal bidding strategies.
We then derive the optimal number of iterations and workers when users cannot control their instances' preemptions, as in GCP's preemptible instances and Azure's low-priority VMs.  
More specifically, this work makes the following contributions:
\begin{enumerate}[wide, labelwidth=!, labelindent=0pt]
\item \emph{Quantifying training error convergence with dynamic numbers of workers (Section~\ref{sec:error_runtime}).} 
Using volatile instances that can be interrupted and may rejoin later presents a new research challenge: prior analyses of distributed SGD algorithms do not consider the possibility that the number of active workers will change over time. We derive new error bounds on the convergence of SGD methods when the number of workers varies over time and show that the bound is proportional to the expected reciprocal of the number of active workers. 

\item \emph{Deriving optimal spot bidding strategies (Section~\ref{sec:spot-bid}).} 
To the best of our knowledge, no works have yet explored bidding strategies for distributed machine learning jobs that consider the bidding's effect on error convergence and random iteration runtimes. 
We analyze a unique three-way trade-off between the cost, error, and training time, using which we can design optimal bidding strategies to control the preemptions of spot instances. For tractability, we focus on the case where each worker submits one of two distinct bids. 

\item \emph{Deriving the optimal number of workers (Section~\ref{sec:preempt}).} For scenarios where users cannot control the preemption probability, we propose a general model to relate the number of provisioned workers to the expected reciprocal of the number of active workers, which can capture practical preemption distributions. Using this model, we then provide mathematical expressions to jointly optimize the number of provisioned workers and iterations. We also propose a strategy to dynamically adjust the number of provisioned workers, which can further improve the error convergence. 

\item \emph{Experimental validation on Amazon EC2 (Section~\ref{sec:experiments}).} We validate our results by running distributed SGD jobs analyzing the CIFAR-10~\cite{cifar10} dataset on Amazon EC2. We show that our derived optimal bid prices can reduce users' cost by 65\% on real, and 62\% on synthetic, spot price traces 
while meeting the same error and completion time requirements, compared with bidding a high price to minimize interruptions as suggested in~\cite{NotBidCloud}. Moreover, we implement and validate two simple but effective dynamic strategies that reduce the cost and yield a better cost/completion time/error trade-off: (i) adding workers later in the job and re-optimizing the bids according to the realized error and training time so far, and (ii) exponentially increasing the number of provisioned workers and running for a logarithmic number of iterations.
\end{enumerate}
\section{Related Work}
\label{sec:related}

Our work is broadly related to prior works on algorithm analysis for distributed machine learning, as well as exploiting spot instances to efficiently run computational jobs. 

{\bf Distributed machine learning} generally assumes that multiple workers send local computation results to be aggregated at a central server, which then sends them updated parameter values. The SGD algorithm~\cite{SGD1951}, in which workers compute the gradients of a given objective function with respect to model parameters, is particularly popular. In SGD, workers individually compute the gradient over stochastic samples (usually a mini-batch \cite{mini-batch}) chosen from data residing at each worker in each iteration. Recent work has attempted to limit device-server communication to reduce the training time of SGD and related models~\cite{konevcny2016federated,shamir2014communication,kamp2018efficient,McMahan2017Federated}, while others analyze the effect of the mini-batch size \cite{mini-batch} or learning rate \cite{Staleness:Aistats,Bottou} on SGD algorithms' training error. Bottou et al.~\cite{Bottou} analyze the convergence of training error in SGD but do not consider the runtime per iteration. Dutta et al.~\cite{Staleness:Aistats} analyze the trade-off between the training error and the (wall-clock) training time of distributed SGD, accounting for stochastic runtimes for the gradient computations at different workers~\cite{wang2019efficient}. Our work is similar in spirit but focuses on spot instances, which introduces cost as another performance metric.
%
We also go beyond~\cite{Bottou,Staleness:Aistats} to derive error bounds when the number of active workers changes in different iterations.

{\bf Utilizing spot and other transient cloud resources} for computing jobs has been extensively studied. Zheng et al.~\cite{BidCloud} design optimal bids to minimize the cost of completing jobs with a pre-determined execution time and no deadline. Other works derive cost-aware bidding strategies that consider jobs' deadline constraints \cite{spot-ddl} or jointly optimize the use of spot and on-demand instances \cite{proteus}. However, these frameworks cannot handle distributed SGD's dependencies between workers.
Another line of work instead optimizes the markets in which users bid for spot instances. Sharma et al.~\cite{NotBidCloud} advocate bidding the price of an on-demand instance and migrating to VM instances in other spot markets upon interruptions. The resulting migration overhead, however, requires complex checkpointing and migration strategies due to SGD's substantial communication dependencies between workers, realizing limited savings~\cite{lee2017deepspotcloud}. Some software frameworks have been designed for running big data analytics on transient instances~\cite{yan2016tr}, but they do not include theoretical ML performance analyses.
\section{Error and Runtime Analysis of Distributed SGD with Volatile Workers}\label{sec:error_runtime}

The number of active computing nodes used for distributed SGD training affects the convergence of the training error versus the number of SGD iterations as well as the runtime spent per iteration. Unlike most previous works in the optimization theory literature, which focus only on error-versus-iterations convergence, we consider both these factors and analyze the true convergence of SGD with respect to the wall-clock time. Moreover, to the best of our knowledge this is the first work that presents an error and runtime analysis for volatile computing instances, which can result in a changing number of active workers during training.

We formally introduce distributed SGD in Section~\ref{sec:sgd}. In Section~\ref{sec:error_analysis}, we quantify how the preemption probability adversely affects error convergence because having fewer active workers yields more noisy gradients. In Section~\ref{sec:runtime_analysis}, we analyze the effect of worker volatility on the training runtime, which is affected in two opposing ways. A higher preemption probability results in longer dead time intervals where we have zero active workers. Although a lower preemption probability yields more active workers, it can increase synchronization delays in waiting for straggling nodes. This error and runtime analysis lays the foundation for subsequent results on bidding strategies that can dynamically control the probability of preemption and the number of active worker nodes. 



In Sections~\ref{sec:spot-bid} and~\ref{sec:preempt}, we use our results on the error and runtime analysis from this section to minimize the cost of training a job, subject to constraints on the maximum allowable error and runtime. Our goal is to solve the optimization:
 \begin{align}\label{eq:simp-obj-cost}
 {\rm minimize:} \quad & \text{Expected total cost $\mathbb{E}[C]$}\\
 \text{st.:} \quad
 &	\text{Expected training error } \mathbb{E}[\phi]  \le \epsilon,
 & 
 \label{eq:simp-acc-constraint}\\
 &	\text{Expected completion time } \mathbb{E}[\tau] \le \theta,  
 \label{eq:simp-ect-constraint}
 \end{align}
 where $\epsilon$ and $\theta$ denote the maximum allowed error and the (wall-clock) job completion time respectively.

\subsection{Distributed SGD Primer}
\label{sec:sgd}

Most state-of-the-art machine learning systems employ Stochastic Gradient Descent (SGD) to train a neural network model so as to minimize the empirical risk function $G: \mathbb{R}^d \to \mathbb{R}$ over a training dataset $\mathcal{S}$, which is defined as 
\begin{align}\label{eq:avg_risk}
    G(\mathbf{w}) \triangleq \frac{1}{|\mathcal{S}|} \sum_{s=1}^{|\mathcal{S}|} l(h(x_s, \mathbf{w}), y_s),
\end{align}
where the vector $\mathbf{w}$ denotes the model parameters (for example, the weights and biases of a neural network model), and the loss $l(h(x_s, \mathbf{w}), y_s)$ compares our model's prediction $h(x_s, \mathbf{w})$ to the true output $y_s$, for each sample $(x_s, y_s)$. 

The mini-batch stochastic gradient descent (SGD) algorithm iteratively minimizes $G(\mathbf{w})$ by computing gradients of $l$ over a small, randomly chosen subset of data samples $\mathcal{S}_j$ in each iteration $j$ and updating $\mathbf{w}$ as per the update rule $\mathbf{w}_{j+1} = \mathbf{w}_j - \alpha_j g(\mathbf{w}_j)$. 
Here $\alpha_j$ is the (pre-specified) step size and  $g(\mathbf{w}_j) =\sum_{s\in \mathcal{S}_j} \nabla l(h(x_s, \mathbf{w}_j), y_s)/{|\mathcal{S}_j|}$, the gradient computed using samples in the mini-batch $\mathcal{S}_j$.

\textbf{Synchronous Distributed SGD.} To further speed up the training, many practical implementations parallelize gradient computation by using the parameter server framework shown in Fig.~\ref{fig:sgd_fig}. In this framework, there is a central parameter server and $n$ worker nodes. Each worker has access to a subset of the data, and in each iteration each worker fetches the current parameters $\mathbf{w}_j$ from the parameter server, computes the gradients of $l(h(x_s, \mathbf{w}_j), y_s)$ over one mini-batch of its data, and pushes them to the parameter server. The parameter server waits for gradients from all $n$ workers before updating the parameters to $\mathbf{w}_{j+1}$ as per 
\begin{align}
\mathbf{w}_{j+1} = \mathbf{w}_j - \frac{\alpha_j}{n} \sum_{i=1}^{n} g^{(i)}(\mathbf{w}_j), \label{eq:sync_sgd}
\end{align}
where $g^{(i)}(\mathbf{w}_j)$ is the mini-batch gradient returned by the $i^{th}$ worker. The updated $\mathbf{w}_{j+1}$ is then sent to all workers, and the process repeats. This gradient aggregation method is commonly referred to as synchronous SGD. Asynchronous gradient aggregation can reduce the delays in waiting for straggling workers, but causes staleness in the gradients returned by workers, which can give inferior SGD convergence \cite{Staleness:Aistats}. While we focus on synchronous SGD in this paper, the insights could be extended to other distributed SGD variants. 

\begin{figure}[t]
    \centering
    \includegraphics[width=8cm]{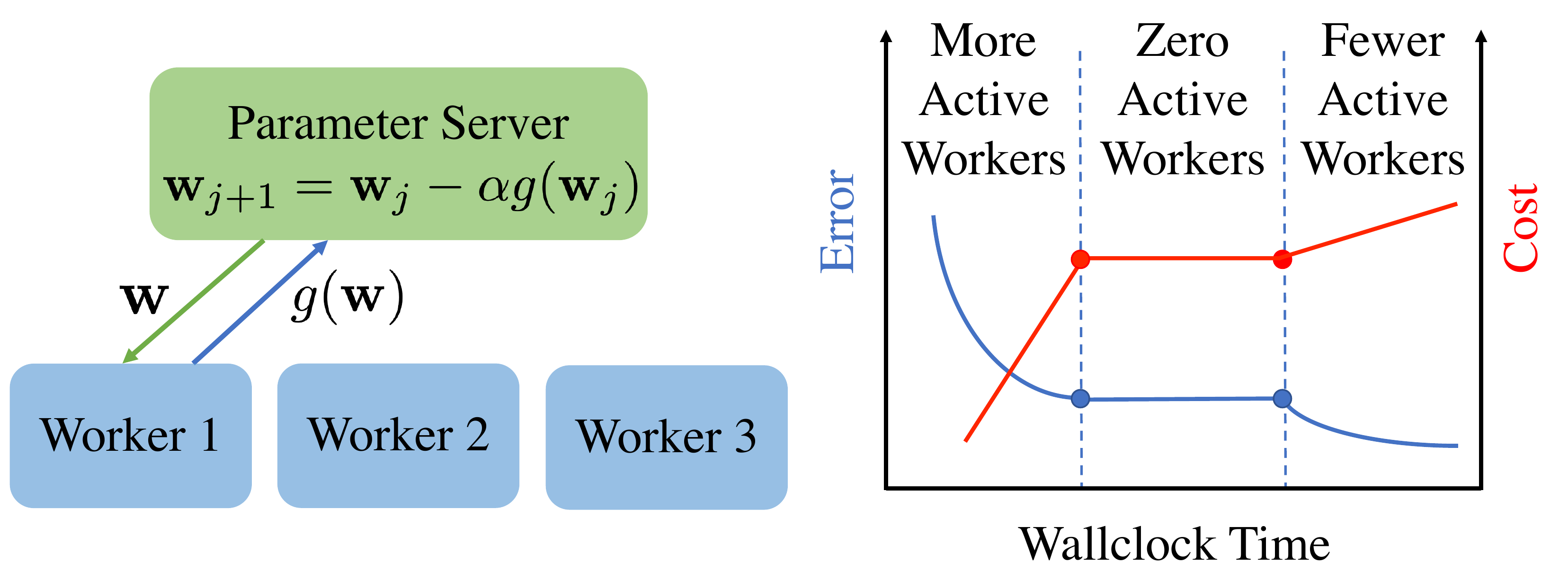}
    \caption{Parameter Server Model and an illustration of how error and cost vary versus training time when the number of workers varies with time. Having more active workers results in a faster decrease in error, but a faster increase in cost.}
    \label{fig:sgd_fig}
\end{figure}

\textbf{Distributed SGD on Volatile Workers.} In this work we consider that the parameter server is run on an on-demand instance, while the $n$ workers are run on volatile instances that can be interrupted or preempted during the training process, as illustrated in Fig.~\ref{fig:sgd_fig}. Let $y_j$ denote the number of active (i.e., not preempted) workers in iteration $j$, such that $0 < y_j \leq n$ for all $j = 1, \dots, J$, where $J$ is the total number of iterations. The sequence $y_1, y_2, \dots y_J$ can be considered as a random process. We do not count ``iterations'' where the number of active workers is $0$, as there is then no gradient update. However, having zero workers will increase the total training runtime, which we will account for in the runtime analysis in Section~\ref{sec:runtime_analysis}.

\subsection{SGD Error Convergence with Variable Number of Workers}
\label{sec:error_analysis}


Next we give an upper-bound on the expected training error in terms of $y_j$ for $j=1, \dots J$. For error convergence analysis we make the following assumptions on the objective function $G$, which are common in most prior works on SGD convergence analysis~\cite{Bottou,Staleness:Aistats}.

\begin{asmp}[Lipschitz-smoothness]\label{asmp-lipschitz}	
	The objective function $G(\mathbf{w}): \mathbb{R}^d \to \mathbb{R}$ is $L$-Lipschitz smooth, i.e., it is continuously differentiable and there exists $L > 0$ such that
	\begin{equation}
	\parallel  \nabla G(\mathbf{w}) - \nabla G(\mathbf{w}') \parallel_2 \le L \parallel  \mathbf{w} - \mathbf{w}' \parallel_2, \forall \mathbf{w}, \mathbf{w}' \in \mathbb{R}^d
	\end{equation}
\end{asmp}
\begin{asmp}[First and second moments]\label{asmp:moments}
	Let $\mathbb{E}_{\mathcal{S}_j} \left[ \nabla G(\mathbf{w}_j, \mathcal{S}_j)  \right]$ represent the expected gradient at iteration $j$ for a mini-batch $\mathcal{S}_j$ of the training data. Then there exist scalars $\mu_G \ge \mu > 0$ such that 
	\begin{align}
	\nabla G(\mathbf{w}_j)^T \mathbb{E}_{\mathcal{S}_j} \left[ \nabla G(\mathbf{w}_j, \mathcal{S}_j)  \right] &\ge  \mu \parallel  \nabla G(\mathbf{w}_j) \parallel_2^2\notag\\
	\text{and} \parallel \mathbb{E}_{\mathcal{S}_j} \left[ \nabla G(\mathbf{w}_j, \mathcal{S}_j)  \right]\parallel_2 
	&\le \mu_G \parallel  \nabla G(\mathbf{w}_j) \parallel_2
	\end{align}
	and scalars $M, M_V \ge 0$ and $M_G = M_V + \mu_G^2 \ge 0$ such that  
	\begin{align}
	\mathbb{E}_{\mathcal{S}_j} \left[ \parallel\nabla G(\mathbf{w}_j, \mathcal{S}_j)  \parallel_2^2 \right] \le M + M_G \parallel \nabla G(\mathbf{w}_j) \parallel_2^2.
	\end{align}
    for any given size of mini-batch $\mathcal{S}_j$ on one worker.
\end{asmp}



\begin{thm}[SGD Error Bound]
\label{thm:error-cv}
Suppose the objective function $G(\cdot)$ satisfies Assumptions \ref{asmp-lipschitz}-- \ref{asmp:moments} and is $c$-strongly convex \cite{Boyd:cv} with parameter $c\le L$. For a fixed step size $0< \alpha< \frac{\mu}{LM_G}$, the 
expected training error after $J$ iterations is:
\begin{align}\label{eq:error-cv}
    \mathbb{E}\left[ G(\mathbf{w}_{J}) - G^* \right]
    &\le (1-\alpha c \mu )^{J} \mathbb{E}\left[ G(\mathbf{w}_0)\right]+ \notag\\
    & \quad \frac{1}{2}\alpha^2 L M \sum_{j=1}^{J} (1-\alpha c \mu )^{J-j}\expect*{ \frac{1}{y_j}}
\end{align}
\end{thm}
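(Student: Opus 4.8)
The plan is to adapt the standard descent-lemma argument for strongly convex SGD (in the spirit of Bottou et al.\ and Dutta et al.) to the setting where the effective update in iteration $j$ averages $y_j$ independent worker gradients, so that the gradient-noise term naturally carries a $1/y_j$ factor. First I would write the volatile-worker update as $\mathbf{w}_{j+1} = \mathbf{w}_j - \alpha\,\tilde g_j$ with $\tilde g_j \triangleq \tfrac{1}{y_j}\sum_{i=1}^{y_j} g^{(i)}(\mathbf{w}_j)$, apply the $L$-smoothness Assumption~\ref{asmp-lipschitz} to get the quadratic upper bound $G(\mathbf{w}_{j+1}) \le G(\mathbf{w}_j) - \alpha\,\nabla G(\mathbf{w}_j)^{\top}\tilde g_j + \tfrac{L\alpha^2}{2}\|\tilde g_j\|_2^2$, and then condition on $(\mathbf{w}_j, y_j)$ and take the expectation over the mini-batch randomness of the $y_j$ active workers.

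The step I expect to be the real obstacle is controlling $\mathbb{E}[\|\tilde g_j\|_2^2 \mid \mathbf{w}_j, y_j]$ with the correct dependence on $y_j$. Because the per-worker gradients $g^{(i)}(\mathbf{w}_j)$ are i.i.d.\ across the active workers with common mean $\bar g \triangleq \mathbb{E}_{\mathcal{S}_j}[\nabla G(\mathbf{w}_j,\mathcal{S}_j)]$, I would use the bias--variance decomposition $\mathbb{E}\|\tilde g_j\|_2^2 = \|\bar g\|_2^2 + \tfrac{1}{y_j}\big(\mathbb{E}\|g^{(1)}(\mathbf{w}_j)\|_2^2 - \|\bar g\|_2^2\big)$ and substitute the two moment bounds of Assumption~\ref{asmp:moments}, namely $\|\bar g\|_2 \le \mu_G\|\nabla G(\mathbf{w}_j)\|_2$ and $\mathbb{E}\|g^{(1)}(\mathbf{w}_j)\|_2^2 \le M + M_G\|\nabla G(\mathbf{w}_j)\|_2^2$. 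Using $M_G = M_V + \mu_G^2$ together with $y_j \ge 1$, this collapses to $\mathbb{E}[\|\tilde g_j\|_2^2 \mid \mathbf{w}_j, y_j] \le \tfrac{M}{y_j} + M_G\|\nabla G(\mathbf{w}_j)\|_2^2$; the care required is to keep the additive noise term attached to $1/y_j$ rather than prematurely using $1/y_j \le 1$, since exactly that factor must survive into the theorem.

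Combining this with the curvature inequality $\nabla G(\mathbf{w}_j)^{\top}\bar g \ge \mu\|\nabla G(\mathbf{w}_j)\|_2^2$ of Assumption~\ref{asmp:moments} yields, after taking the conditional expectation,
\[
\mathbb{E}\big[G(\mathbf{w}_{j+1}) \mid \mathbf{w}_j, y_j\big] \le G(\mathbf{w}_j) - \Big(\alpha\mu - \tfrac{L\alpha^2 M_G}{2}\Big)\|\nabla G(\mathbf{w}_j)\|_2^2 + \tfrac{L\alpha^2 M}{2\,y_j},
\]
and the step-size condition $\alpha < \mu/(LM_G)$ makes the coefficient of $\|\nabla G(\mathbf{w}_j)\|_2^2$ at least $\tfrac{\alpha\mu}{2} > 0$. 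I would then invoke $c$-strong convexity in its Polyak--{\L}ojasiewicz form $\|\nabla G(\mathbf{w})\|_2^2 \ge 2c\,(G(\mathbf{w}) - G^*)$ to turn the gradient term into a contraction of the optimality gap, obtaining $\mathbb{E}[G(\mathbf{w}_{j+1}) - G^* \mid \mathbf{w}_j, y_j] \le (1-\alpha c\mu)\,(G(\mathbf{w}_j) - G^*) + \tfrac{L\alpha^2 M}{2\,y_j}$.

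Finally I would take the full expectation over both the SGD trajectory and the preemption process; since preemptions are exogenous, the mini-batch draws entering the conditional expectation above are independent of $(y_j)$, so the expectation of $\tfrac{L\alpha^2 M}{2 y_j}$ is simply $\tfrac{L\alpha^2 M}{2}\,\mathbb{E}[1/y_j]$. This produces the scalar recursion $e_j \le (1-\alpha c\mu)\,e_{j-1} + \tfrac{L\alpha^2 M}{2}\,\mathbb{E}[1/y_j]$ for $e_j \triangleq \mathbb{E}[G(\mathbf{w}_j) - G^*]$, where positivity of $1-\alpha c\mu$ follows from $\alpha c\mu < \mu^2/M_G \le 1$ (using $c \le L$, $\mu_G \ge \mu$, and $M_G \ge \mu_G^2$). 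Unrolling this recursion over $j = 1,\dots,J$ and collecting the geometric weights $(1-\alpha c\mu)^{J-j}$ then yields exactly~\eqref{eq:error-cv}, up to routine algebra.
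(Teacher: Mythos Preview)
Your proposal is correct and follows essentially the same approach as the paper: the $L$-smoothness descent lemma, the moment bounds of Assumption~\ref{asmp:moments} (with the $M$-term scaled by $1/y_j$), the step-size condition $\alpha<\mu/(LM_G)$ to halve the $\mu$-coefficient, the Polyak--\L{}ojasiewicz inequality from $c$-strong convexity, and recursive unrolling. Your treatment is in fact more explicit than the paper's terse appendix proof, particularly the bias--variance decomposition showing why only the additive $M$-noise picks up $1/y_j$ while the $M_G$-term does not, and your careful accounting of the conditioning on $(\mathbf{w}_j,y_j)$ versus the full expectation over the preemption process.
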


The proof is given in the Appendix. The above convergence bound can be extended to handle non-convex objective function $G(\cdot)$ and a diminishing step size, where we analyze the convergence speed to a stationary point. We omit this extension for brevity.

\begin{remark}[Penalty for Using Volatile Instances]
\label{rem:penalty}
The error bound in Theorem~\ref{thm:error-cv} given the expected number of active workers $\mathbb{E}\left[y_j\right]$ is minimized when $y_j$ is not a random variable, i.e., SGD is run on on-demand instead of volatile instances. This result follows from the convexity of $y_j^{-1}$; using Jensen's inequality we can show that fixing the number of active workers to $y = \mathbb{E}\left[y_j\right]$ minimizes $\mathbb{E}\left[y_j^{-1}\right]$. 
\end{remark}

\begin{remark}[Error and Preemption Probability]
Suppose that a worker is preempted with probability $q$ in each iteration. Then the bound in Theorem~\ref{thm:error-cv} increases with $q$ because $\expect{1/y_j}$ increases with $q$. Thus, more frequent preemption or interruption of workers reduces the effective number of active workers and yields worse error convergence.   
\end{remark}

\subsection{SGD Runtime Analysis with Volatile Workers}
\label{sec:runtime_analysis}

Now let us analyze how using volatile workers affects the training runtime. The runtime has two components: 1) the time required to complete the $J$ SGD iterations, and 2) the idle time when no workers are active and thus no iterations can be run.

Let $R(y_j)$ denote the runtime of the $j^{th}$ iteration in which we have the set $\mathcal{Y}_j$ of $y_j$ active workers. Suppose each worker takes time $r_k$ to compute its gradient, where $r_k$ is a random variable. Fluctuations in computation time are common especially in cloud infrastructure due to background processes, node outages, network delays etc. \cite{stragglers-ls}. Since the parameter server has to wait for all $y_j$ workers to finish their gradient computations, the runtime per iteration is,
\begin{align}
R(y_j) = \max_{k \in \mathcal{Y}_j} r_k + \Delta,
\end{align}
where $\Delta$ is the time taken by the parameter server to update $\mathbf{w}$ and push it to the $y_j$ workers. The expected runtime $\expect*{R(y_j)}$ increases with the number of active workers. For example, if $r_k \sim \exp(\mu)$, an exponential random variable that is i.i.d.\ across workers and mini-batches, then $\expect*{R(y_j)} \approx (\log y_j)/\mu + \Delta$. Adding this per-iteration runtime to the idle time when no workers are active, we can show that the expected time required to complete $J$ SGD iterations is 
\begin{align*}
\label{constraint:completion_preemption}
\expect*{\tau} = 
\sum_{j=1}^J \mathbb{E}\left[R( y_j)\right] + \expect*{\text{idle time with no active workers}}
\end{align*}
For example, when each worker is preempted uniformly at random with probability $q$ in each iteration (as described in Remark 2), then the expected completion time becomes $ \expect*{\tau} = \sum_{j=1}^J \mathbb{E}\left[R( y_j)\right]/(1 - q^n)$.

\section{Optimizing Spot Instance Bids}\label{sec:spot-bid}

In this section, we use the results of Section~\ref{sec:error_runtime} to derive the bid prices and number of iterations that minimize the cost of running distributed SGD with workers placed on spot instances. We first consider the simple case in which we submit the same bid for each worker in Section~\ref{sec:uniform-bid} and then consider the heterogeneous bid case in Section~\ref{sec:DifferentBids}. 

\textbf{Spot Price and Bidding Model.} Let $p_t$ denote the spot price of each instance at time $t$. We assume $p_t$ is i.i.d. and is bounded between a lower-bound $\ubar{p}$ and an upper-bound $\bar{p}$, similar to prior works on optimal bidding in spot markets~\cite{BidCloud}. Let $f(\cdot)$ and $F(\cdot)$ denote the probability density function (PDF)~\cite{wiki:pdf} and the cumulative density function (CDF)~\cite{wiki:cdf} of the random variable $p_t$. 
When a bid $b$ is placed for an instance, we consider that the provider assigns available spot capacity to users in descending order of their bids, stopping at users with bids below the prevailing spot price. Thus, a worker is active only if its bid price exceeds the current spot price. Hence, without loss of generality the range of the bid price can also be assumed to be $\ubar{p} \le b \le \bar{p}$. Whenever a worker is active ($b \geq p_t$), the per-time cost incurred for running it is equal to the prevailing spot price $p_t$ (not the bid price).


\subsection{Identical Worker Bids}
\label{sec:uniform-bid}
Suppose we choose bid price $b$ for each of the $n$ workers. We first simplify the error and runtime in  Section~\ref{sec:error_runtime} for this case, and then solve the cost minimization problem (\ref{eq:simp-obj-cost})-(\ref{eq:simp-ect-constraint}). 

%
%
Observe that the $n$ workers are either all available or all interrupted depending on the bid price $b$.
%
%
This insight implies that $\mathbb{E}\left[y_j^{-1}\right] = 1/n$, and thus that the error bound in Theorem~\ref{thm:error-cv} is \emph{independent of the bid $b$}: this bid affects only the frequency with which iterations are executed, not the number of active workers in an iteration. We can thus rewrite the error bound as a function of $J$, the number of iterations required to reach error $\epsilon$. 
Formally, we set $\hat{\phi}$ to be the right-hand side of \eqref{eq:error-cv} and $J \geq \hat{\phi}^{-1}(\epsilon)$, where $\hat{\phi}^{-1}(\epsilon)$ is the number of iterations required to ensure that the expected error is no larger than $\epsilon$.

We further observe that, the number of active workers $y_j$ always equals $n$ when the job is running. Thus, the expected runtime per iteration can be rewritten as $\mathbb{E}\left[R(y_j   )\right] = \expect*{R(n)}$. 
Accounting for the idle time we can show that the expected completion time is monotonic with $b$:

\begin{lem}[Completion Time in Terms of Bid Price]\label{lem:single-bid-completion-monotonicity}
Using the same bid price $b$ for all workers, the expected completion time to complete $J$ iterations of synchronous SGD is
 \begin{equation}\label{eq:expected-completion-time}
 \expect*{\tau} = J \expect*{R(n)} / F(b), 
 \end{equation}
which increases with $J$ and is non-increasing in the bid price $b$. The function $F(\cdot)$ is the CDF of the spot price.
\end{lem}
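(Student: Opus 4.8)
The plan is to reduce the completion time to a short renewal-type computation built on the two facts established in the paragraphs just before the lemma. First, with a common bid $b$ the $n$ workers are simultaneously active or simultaneously preempted, so in every iteration that actually runs we have $y_j = n$ and hence its runtime is distributed as $R(n)$. Second, the $n$ workers are active in a given attempt precisely when the prevailing spot price satisfies $p_t \le b$, which by definition of the CDF occurs with probability $F(b)$, independently across attempts since the $p_t$ are i.i.d. So an iteration succeeds in each attempt with probability $F(b)$.

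Next I would make the idle-time bookkeeping explicit. Write $\tau = \sum_{j=1}^J (I_j + R_j)$, where $R_j$ is the runtime of the $j$-th executed iteration (law of $R(n)$) and $I_j$ is the preceding idle stretch during which all workers are down. Because the down-events are i.i.d.\ across attempt intervals with probability $1-F(b)$, the number of wasted attempts before the $j$-th successful one is geometric with success probability $F(b)$; charging each attempt, successful or not, the same expected length $\expect*{R(n)}$ — the natural granularity at which preempted workers re-poll for availability, consistent with the idle-time model of Section~\ref{sec:runtime_analysis} — gives $\expect*{I_j} = \left(1/F(b) - 1\right)\expect*{R(n)}$. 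Summing over $j$ yields $\expect*{\tau} = \sum_{j=1}^J \left(\expect*{R(n)} + \left(1/F(b)-1\right)\expect*{R(n)}\right) = J\,\expect*{R(n)}/F(b)$, which is \eqref{eq:expected-completion-time}. Equivalently, this is exactly the specialization of the general expression $\expect*{\tau} = \sum_j \expect*{R(y_j)}/(1-q^n)$ from Section~\ref{sec:runtime_analysis} to the perfectly correlated case: the probability that \emph{all} workers are down is $q = 1-F(b)$ rather than $q^n$, and $y_j \equiv n$.

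The monotonicity claims then follow from the closed form. Since $\expect*{R(n)} > 0$, the right-hand side is a strictly increasing linear function of $J$; and since $F$ is a cumulative distribution function it is non-decreasing, so $b \mapsto 1/F(b)$ is non-increasing wherever $F(b) > 0$, and hence $\expect*{\tau}$ is non-increasing in $b$.

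The main obstacle is the middle step: pinning down the expected duration charged to a preempted attempt so that the idle time telescopes cleanly into $J\,\expect*{R(n)}/F(b)$ rather than into an expression with an extra free parameter. Everything else is either quoted from the text preceding the lemma (all-or-nothing availability, $y_j = n$) or elementary (the geometric mean, monotonicity of $1/F$). I would therefore either invoke the Section~\ref{sec:runtime_analysis} idle-time model directly, or, for a self-contained argument, state as a modeling assumption that availability is re-checked on the per-iteration timescale and apply Wald's identity to the geometric number of attempts needed per successful iteration.
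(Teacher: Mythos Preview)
Your proposal is correct and follows the same line of reasoning as the paper, which in fact does not give a separate formal proof of this lemma in the appendix: the ``proof'' is the brief discussion immediately preceding the statement (all workers are active together so $y_j\equiv n$ whenever the job runs, and one accounts for idle time as in Section~\ref{sec:runtime_analysis}). Your renewal/geometric argument and the specialization of the $\sum_j \expect*{R(y_j)}/(1-q^n)$ formula to the perfectly correlated case are exactly the intended derivation, just made more explicit; the idle-time modeling concern you flag is real and is indeed left as an implicit assumption in the paper.
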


We can further show the expected cost (defined in \eqref{eq:simp-obj-cost}) is monotonically non-decreasing with $b$ and $J$.
\begin{lem}[Cost in Terms of Bid Price]\label{lem:single-bid-cost-monotonicity}
Using one bid price for all workers, the expected cost of finishing a synchronous SGD job is given by
\begin{equation}\label{eq:cost-simp}
     \expect*{C} = J n \expect*{R(n)} \left( \ubar{p} + \int_{\ubar{p}}^{b}  \left(1-\frac{F(p)}{F(b)}\right)\text{d}p \right),
\end{equation}
which is non-decreasing in the bid price $b$ and $J$. The function $F(\cdot)$ is the CDF of the spot price.
\end{lem}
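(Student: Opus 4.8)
The plan is to build the cost expression from first principles and then differentiate it. First I would note that since all $n$ workers carry the same bid $b$, they are simultaneously active (whenever $p_t \le b$) or simultaneously idle, so worker cost accrues only during the $J$ iterations that actually execute --- the idle stretches where every worker is preempted contribute nothing. Using the i.i.d.\ structure of the per-iteration runtimes, the expected total running time is $J\,\expect*{R(n)}$, and this factor does not depend on $b$ because whenever the job runs the active set is all $n$ workers. During a running iteration the spot price is conditioned to lie below $b$, and an active worker is charged the prevailing price $p_t$ per unit time, so the expected cost rate per active worker is $\expect*{p_t | p_t \le b} = F(b)^{-1}\int_{\ubar p}^{b} p\, f(p)\,\text{d}p$. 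Multiplying the three factors gives $\expect*{C} = nJ\,\expect*{R(n)}\,F(b)^{-1}\int_{\ubar p}^{b} p f(p)\,\text{d}p$.

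Next I would put this in the form claimed in \eqref{eq:cost-simp} by integrating $\int_{\ubar p}^{b} p f(p)\,\text{d}p$ by parts: since $F(\ubar p)=0$ this equals $bF(b) - \int_{\ubar p}^{b} F(p)\,\text{d}p$, so dividing by $F(b)$ and writing $b = \ubar p + \int_{\ubar p}^{b}\text{d}p$ produces $\ubar p + \int_{\ubar p}^{b}\bigl(1 - F(p)/F(b)\bigr)\,\text{d}p$, which is exactly the bracketed factor in the lemma.

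For the monotonicity claims: monotonicity in $J$ is immediate once one checks that the bracketed factor $g(b) := \ubar p + \int_{\ubar p}^{b}(1 - F(p)/F(b))\,\text{d}p$ is non-negative (its integrand is non-negative on $[\ubar p,b]$ and $\ubar p \ge 0$), since $\expect*{C}$ is then linear in $J$ with a non-negative slope. For $b$ I would differentiate the equivalent form $g(b) = b - F(b)^{-1}\int_{\ubar p}^{b} F(p)\,\text{d}p$, which gives $g'(b) = f(b)\,F(b)^{-2}\int_{\ubar p}^{b} F(p)\,\text{d}p \ge 0$; alternatively, one can argue directly that $\expect*{p_t | p_t \le b}$ is non-decreasing in $b$ because enlarging the conditioning event $\{p_t \le b\}$ only mixes in larger price values.

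I do not expect a real obstacle here --- the result is essentially a consequence of the cost model combined with the same completion-time accounting behind Lemma~\ref{lem:single-bid-completion-monotonicity}. The only place that needs genuine care is the bookkeeping in the first step: correctly recognizing that idle intervals incur no worker cost, that the running-time factor $J\expect*{R(n)}$ is independent of $b$ (the active set is always the full set of $n$ workers), and that the per-unit running cost of an active worker is the \emph{conditional} mean $\expect*{p_t | p_t \le b}$ rather than $\expect*{p_t}$ or the bid $b$ itself. After that, everything reduces to the integration-by-parts identity and a one-line derivative.
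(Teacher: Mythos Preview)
Your proposal is correct and follows essentially the same route as the paper: set up $\expect*{C}=Jn\expect*{R(n)}\,F(b)^{-1}\int_{\ubar p}^{b}p f(p)\,\text{d}p$ and reduce it to the stated form via integration by parts using $F(\ubar p)=0$. In fact you are more thorough than the paper's own proof, which stops after deriving the formula and does not explicitly verify the monotonicity in $b$; your differentiation of $g(b)=b-F(b)^{-1}\int_{\ubar p}^{b}F(p)\,\text{d}p$ fills that gap cleanly.
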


Since both $\expect*{\tau}$ and $\expect*{C}$ increase with $J$, we should set $J$ to be equal to $\hat{\phi}^{-1}(\epsilon)$ in order to reach the target error in minimum time and cost of the volatile workers.


{\bf Optimizing the Bid Price.} Having shown that $J = \hat{\phi}^{-1}(\epsilon)$, we now find the optimal bid $b$ that minimizes the expected cost \eqref{eq:cost-simp} to solve the optimization problem~\eqref{eq:simp-obj-cost}--\eqref{eq:simp-ect-constraint}.  %

According to Amazon's policy~\cite{spot}, $b$ is determined upon the job submission without knowing the future spot prices and will be fixed for the job's lifetime. Although the user can effectively change the bid price by terminating the original request and re-bidding for a new VM, doing so induces significant migration overhead. Thus, we assume that users employ persistent spot requests: a worker with a persistent request will be resumed once the spot price falls below its bid price, exiting the system once its job completes.
%
%
Using Lemma~\ref{lem:single-bid-completion-monotonicity} and Lemma~\ref{lem:single-bid-cost-monotonicity}, we can show the following theorem for the optimal bid price $b$. 

\begin{thm}[Optimal Uniform Bid]\label{thm:identical-opt-bids}
When we make an identical bid $b$ for $n$ workers and use them to perform distributed synchronous SGD to reach error $\epsilon$ within time $\theta$, the optimal bid price that minimizes the cost is 
%
$b^* = F^{-1}\left( \frac{\hat{\phi}^{-1}(\epsilon)\expect*{R(n)}}{\theta} \right)$. 
\end{thm}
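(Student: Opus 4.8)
The plan is to decouple the choice of the number of iterations $J$ from the choice of the bid $b$, reduce \eqref{eq:simp-obj-cost}--\eqref{eq:simp-ect-constraint} to a one‑dimensional problem in $b$, and then resolve it by playing off the opposing monotonicities established in Lemmas~\ref{lem:single-bid-completion-monotonicity} and~\ref{lem:single-bid-cost-monotonicity}. First I would argue that $J = \hat{\phi}^{-1}(\epsilon)$ at any optimum: in the identical‑bid case all $n$ workers are simultaneously active or inactive, so $\mathbb{E}\!\left[y_j^{-1}\right] = 1/n$ and the right‑hand side $\hat\phi$ of \eqref{eq:error-cv} neither depends on $b$ nor on anything except $J$, and it is decreasing in $J$; hence the error constraint \eqref{eq:simp-acc-constraint} is exactly $J \ge \hat\phi^{-1}(\epsilon)$. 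Since both $\mathbb{E}[\tau]$ (Lemma~\ref{lem:single-bid-completion-monotonicity}) and $\mathbb{E}[C]$ (Lemma~\ref{lem:single-bid-cost-monotonicity}) are non‑decreasing in $J$, enlarging $J$ beyond $\hat\phi^{-1}(\epsilon)$ only hurts both objectives and constraints, so we may fix $J = \hat\phi^{-1}(\epsilon)$.

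With $J$ fixed, the only remaining variable is $b \in [\ubar p, \bar p]$. By Lemma~\ref{lem:single-bid-cost-monotonicity}, $\mathbb{E}[C]$ is non‑decreasing in $b$, so the objective pushes $b$ as small as possible. By Lemma~\ref{lem:single-bid-completion-monotonicity}, $\mathbb{E}[\tau] = J\,\mathbb{E}[R(n)]/F(b)$ is non‑increasing in $b$, so the completion‑time constraint \eqref{eq:simp-ect-constraint} is equivalent to $F(b) \ge J\,\mathbb{E}[R(n)]/\theta$, i.e.\ it imposes a \emph{lower} bound on $b$. Because $F$ is continuous and strictly increasing (hence invertible) on the support of the spot price, this reads $b \ge F^{-1}\!\big(J\,\mathbb{E}[R(n)]/\theta\big)$. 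Combining ``cost wants $b$ small'' with ``the deadline forces $b$ large,'' the minimizer is the smallest feasible bid, $b^* = F^{-1}\!\big(\hat\phi^{-1}(\epsilon)\,\mathbb{E}[R(n)]/\theta\big)$, at which the completion‑time constraint is tight; if $\mathbb{E}[C]$ is only weakly increasing we take this smallest optimizer.

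Finally I would dispatch well‑posedness and feasibility. The expression for $b^*$ is meaningful exactly when $\hat\phi^{-1}(\epsilon)\,\mathbb{E}[R(n)]/\theta$ lies in the range of $F$, i.e.\ $F(\ubar p) \le \hat\phi^{-1}(\epsilon)\,\mathbb{E}[R(n)]/\theta \le 1$: the upper inequality is just the necessary condition $\theta \ge J\,\mathbb{E}[R(n)]$ that the deadline be no shorter than the idealized preemption‑free runtime (otherwise the problem is infeasible for any $b$), and the lower inequality holds whenever $F(\ubar p)=0$. I would also note that the error constraint is automatically met at $b^*$ since it is independent of $b$ once $J=\hat\phi^{-1}(\epsilon)$. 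I do not expect a deep obstacle here: the substance of the argument lives in Lemmas~\ref{lem:single-bid-completion-monotonicity}--\ref{lem:single-bid-cost-monotonicity} (the closed forms for $\mathbb{E}[\tau]$, $\mathbb{E}[C]$ and their monotonicity in $b$ and $J$), which I am free to assume; conditional on them, the only points demanding care are the decoupling of $J$ from $b$ and the invertibility/boundary conditions on $F$ just described.
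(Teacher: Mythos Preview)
Your proposal is correct and follows essentially the same approach as the paper's own proof: fix $J=\hat{\phi}^{-1}(\epsilon)$ using the $b$-independence of the error bound and the monotonicity of cost and completion time in $J$, then exploit the opposing monotonicities of $\mathbb{E}[C]$ and $\mathbb{E}[\tau]$ in $b$ (Lemmas~\ref{lem:single-bid-completion-monotonicity}--\ref{lem:single-bid-cost-monotonicity}) to conclude that the deadline constraint binds, yielding $b^*=F^{-1}\!\big(\hat{\phi}^{-1}(\epsilon)\,\mathbb{E}[R(n)]/\theta\big)$. Your write-up is in fact more careful than the paper's, explicitly treating the feasibility condition $\theta\ge J\,\mathbb{E}[R(n)]$ and the invertibility of $F$.
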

Theorem \ref{thm:identical-opt-bids} provides a general form of the optimal bid price, given the number of workers per iteration, $n$, the deadline $\theta$, and the target error bound $\epsilon$, for any distributions of the spot price and training runtime per iteration. 

\subsection{Optimal Heterogeneous Bids}
\label{sec:DifferentBids}

We next extend our results from Section~\ref{sec:uniform-bid} to find the optimal bidding strategy with two distinct bid prices $b_1$ and $b_2$ for two groups of workers. This strategy is motivated by the observation that 
bidding lower prices for some workers yields a larger number of active workers when the spot price is relatively low, which improves the training error but will not cost much.
 %
%
Formally, we place bids of $b_1$ for workers $1, \cdots, n_1$ and $b_2$ ($< b_1$) for workers $n_1+1, \cdots, n$. We define the random variable $y( \vec{b}) \in \{n_1, n\}$ as the number of active workers when the bid prices are $\vec{b} = (b_1, b_2)$. Note that the times when $0$ workers are active are not considered into an SGD `iteration'. Thus, $y( \vec{b})$ can only be either $n_1$ (with probability $\frac{F(b_1) - F(b_2)}{F(b_1)}$) or $n$ (with probability $F(b_2)/F(b_1)$) in each iteration. 

{\bf Optimized Bids.} We initially assume that $n_1$, the number of workers in the first group, and $J$, the required number of iterations, are fixed; thus, we optimize the trade-off between the expected cost, expected completion time, and the expected training error using only the bid prices $\vec{b}$.
After deriving the closed-form optimal solutions of $b_1$ and $b_2$ in Theorem \ref{thm:opt-two-bids-fully-sync}, we discuss co-optimizing $n_1$ and $J$ with the bids $\vec{b}$.
%
The expected cost minimization problem ~\eqref{eq:simp-obj-cost}--\eqref{eq:simp-ect-constraint} then becomes:
\begin{align}
    \min_{\vec{b}} \quad 
    & J \int_{\ubar{p}}^{b_1} \expect*{R(\vec{b}, p)} y(\vec{b}) p\frac{f(p)}{F(b_1)} \text{d}p\label{obj:fully-sync}\\
    \text{subject to:}\quad  
    & \expect*{\hat{\phi}(\vec{b})} \le \epsilon
    \quad \text{(Error constraint)}
    \label{constraint:error-fully-sync}\\
    & \frac{J}{F(b_1)} \int_{\ubar{p}}^{b_1} \expect*{R(\vec{b}, p)}\frac{f(p)}{F(b_1)}\text{d}p 
    \le \theta \label{constraint:ddl-fully-sync}\\
    & \bar{p} \ge b_1 \ge b_2 \ge \ubar{p},~\forall i\le j \label{constraint:bids-fully-sync}
\end{align}
To derive the cost and completion time expressions in~(\ref{obj:fully-sync}) and (\ref{constraint:ddl-fully-sync}), 
we express the expected runtime of iteration $j$ as $\expect*{R(\vec{b}, p)}$, a function of the bids and price; $y_j$ depends on $\vec{b}$ and thus is re-written as $y(\vec{b})$.
For simplicity, we assume that the spot prices do not change within each iteration. 
In practice, the spot price changes at most once per hour 
\cite{price_change}, compared to a runtime of several minutes per iteration, and thus this assumption usually holds. Note that we did not need this assumption for the identical bid case in Section~\ref{sec:uniform-bid} since all workers become active/inactive at the same time. 


To derive the optimal bid prices, we first relate the distribution of the spot price and our bid prices to the training error through the number of active workers, \ie, $y(\vec{b})$. From Theorem \ref{thm:error-cv}, the expected error is at most $\epsilon$ if $y(\vec{b})$ satisfies: 
\begin{align}\label{eq:Qe}
    \expect*{\frac{1}{y(\vec{b})}} 
    \le  
    \frac{2c \mu \left( \epsilon - (1-\alpha c \mu)^J  \expect*{G(\mathbf{w}_0)}  \right)}{\alpha L M \left( 1 - (\alpha c \mu)^{J} \right)} \triangleq Q(\epsilon)
\end{align}
Further, we simplify $\expect*{R(\vec{b},p)}$ to be a function of the number of active workers: $\expect*{R(X)}$ is the expected runtime per iteration given $X$ workers are active. We then provide closed-form expressions for the optimal bid prices through Theorem \ref{thm:opt-two-bids-fully-sync}.

\begin{thm}[Optimal-Two Bids with a Fixed $J$]
\label{thm:opt-two-bids-fully-sync}
Suppose the objective function $G(\cdot)$ satisfies Assumptions \ref{asmp-lipschitz}--\ref{asmp:moments}. 
Given a number of iterations ($J$) that can guarantee $\frac{1}{n} < Q(\epsilon) \le \frac{1}{n_1}$ ($Q(\epsilon)$ is defined as the right-hand side of \eqref{eq:Qe}), a fixed step size $\alpha$, and a feasible deadline ($\theta \ge J\expect*{R(n)}$), we have the optimal bid prices $b^{\ast}_1$ and $b^{\ast}_2$:
\begin{align}
    b^{\ast}_1 &= F^{-1}\left(\frac{J}{\theta} \left( \left( \expect*{R(n)} - \expect*{R(n_1)} \right)\frac{\frac{1}{n_1} - Q(\epsilon)}{\frac{1}{n_1} - \frac{1}{n}} + \expect*{R(n_1)} \right) \right)\notag\\
    b^{\ast}_2 &= F^{-1}\left(\frac{\frac{1}{n_1} - Q(\epsilon)}{\frac{1}{n_1} - \frac{1}{n}} \times F(b^{\ast}_1)\right)\label{eq:opt-two-bids-fully-sync},
\end{align}
for any i.i.d. spot price and any i.i.d. running time per mini-batch, \ie, $F(\cdot)$ and $\expect*{R(n)}$ (or $\expect*{R(n_1)}$) do not change during the training process.
\end{thm}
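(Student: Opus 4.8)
The plan is to collapse the two-variable problem over $\vec b=(b_1,b_2)$ to a scalar one by eliminating $b_2$ through the error constraint~\eqref{constraint:error-fully-sync} and then $b_1$ through the deadline constraint~\eqref{constraint:ddl-fully-sync}, after first showing that both constraints hold with equality at the optimum. Throughout write $u:=F(b_1)$, $v:=F(b_2)$ (so $0\le v\le u\le 1$ for a continuous spot price supported on $[\ubar p,\bar p]$) and $P(b):=\int_{\ubar p}^{b}p\,f(p)\,\text{d}p$. Since $y(\vec b)=n_1$ with probability $(u-v)/u$ and $y(\vec b)=n$ with probability $v/u$, Theorem~\ref{thm:error-cv} together with~\eqref{eq:Qe} makes~\eqref{constraint:error-fully-sync} equivalent to $\expect*{1/y(\vec b)}=\frac{1}{n_1}\frac{u-v}{u}+\frac{1}{n}\frac{v}{u}\le Q(\epsilon)$, i.e.\ to $v\ge\rho u$ with $\rho:=\frac{1/n_1-Q(\epsilon)}{1/n_1-1/n}$; the hypothesis $\frac1n<Q(\epsilon)\le\frac1{n_1}$ gives $\rho\in[0,1)$. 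Splitting the integrals in~\eqref{obj:fully-sync} and~\eqref{constraint:ddl-fully-sync} at $p=b_2$ (a running iteration takes expected time $\expect*{R(n_1)}$ when $b_2\le p<b_1$ and $\expect*{R(n)}$ when $p<b_2$) gives
\begin{align*}
\expect*{\tau}&=\frac{J}{u^{2}}\big(\expect*{R(n_1)}(u-v)+\expect*{R(n)}\,v\big),\\
\expect*{C}&=\frac{J}{u}\big(n_1\expect*{R(n_1)}(P(b_1)-P(b_2))+n\expect*{R(n)}\,P(b_2)\big).
\end{align*}

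\emph{The error constraint is tight.} Fix $b_1$ (hence $u$ and $P(b_1)$) and vary $b_2$. Then $\partial\expect*{\tau}/\partial v=\frac{J}{u^{2}}(\expect*{R(n)}-\expect*{R(n_1)})\ge0$ and $\partial\expect*{C}/\partial b_2=\frac{J\,b_2 f(b_2)}{u}(n\expect*{R(n)}-n_1\expect*{R(n_1)})\ge0$, using $n>n_1$ and that $\expect*{R(\cdot)}$ is non-decreasing in the number of active workers (Section~\ref{sec:runtime_analysis}). So lowering $b_2$ weakly helps both the cost and the deadline, and at an optimum $v=\rho u$, i.e.\ $F(b_2^\ast)=\rho F(b_1^\ast)$, which is the expression for $b_2^\ast$ in~\eqref{eq:opt-two-bids-fully-sync}.

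\emph{The deadline constraint is tight.} Substituting $v=\rho u$, the completion time becomes $\expect*{\tau}=\frac{J}{u}\bar R$ with $\bar R:=(1-\rho)\expect*{R(n_1)}+\rho\expect*{R(n)}$, which is strictly decreasing in $u$, so~\eqref{constraint:ddl-fully-sync} forces $u\ge J\bar R/\theta$. I claim $\expect*{C}$ is non-decreasing in $u$, so the smallest feasible $u$ is cost-optimal. Writing $\expect*{C}(u)=\frac{J}{u}\big(A\,P(b_1)+(B-A)\,P(b_2)\big)$ with $A:=n_1\expect*{R(n_1)}$ and $B:=n\expect*{R(n)}\ge A$, and using $\frac{\text{d}}{\text{d}u}P(b_1)=b_1$ and $\frac{\text{d}}{\text{d}u}P(b_2)=\rho b_2$ (from $u=F(b_1)$ and $\rho u=F(b_2)$), the sign of $\expect*{C}'(u)$ is that of $A\big(ub_1-P(b_1)\big)+(B-A)\big(\rho u b_2-P(b_2)\big)$, which is $\ge0$ because $ub_1-P(b_1)=\int_{\ubar p}^{b_1}(b_1-p)f(p)\,\text{d}p\ge0$, $\rho u b_2-P(b_2)=\int_{\ubar p}^{b_2}(b_2-p)f(p)\,\text{d}p\ge0$, and $B-A\ge0$. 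Hence the optimum is $u^\ast=J\bar R/\theta$, i.e.~\eqref{constraint:ddl-fully-sync} also holds with equality, giving $F(b_1^\ast)=\frac{J}{\theta}\big((\expect*{R(n)}-\expect*{R(n_1)})\rho+\expect*{R(n_1)}\big)$, the expression for $b_1^\ast$ in~\eqref{eq:opt-two-bids-fully-sync}.

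\emph{Feasibility and the main obstacle.} Finally I would verify~\eqref{constraint:bids-fully-sync}: $b_1^\ast\le\bar p$ because $F(b_1^\ast)=J\bar R/\theta\le J\expect*{R(n)}/\theta\le1$ (using $\bar R\le\expect*{R(n)}$ and the hypothesis $\theta\ge J\expect*{R(n)}$), $b_2^\ast\le b_1^\ast$ because $\rho<1$ and $F$ is increasing, and $b_2^\ast\ge\ubar p$ because $\rho\ge0$. The genuinely non-routine point is the cost-monotonicity claim of the previous paragraph: after $b_2$ is eliminated, $\expect*{C}$ depends on $b_1$ both through the prefactor $1/u$ and through the price-weighted integrals $P(b_1),P(b_2)$, which do not reduce to functions of $u$ alone, and the elementary bound $P(b)\le bF(b)$ is exactly what makes these two effects cancel with the right sign. (As consistency checks, the cases $\rho=0$ and $\rho\to1$ collapse the result to the single-bid optimum of Theorem~\ref{thm:identical-opt-bids} with $n_1$ and with $n$ workers, respectively.)
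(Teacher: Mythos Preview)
Your proposal is correct and follows essentially the same route as the paper: the paper's proof sketch changes variables to $(F(b_1),\gamma)$ with $\gamma=F(b_2)/F(b_1)$ and argues (via Figure~\ref{fig:illustration-optimal-two-bids}) that error depends only on $\gamma$ while cost and completion time are monotone in both $F(b_1)$ and $\gamma$, so both constraints bind; your $(u,v)$ with $\rho=v/u$ is the same reparametrization, and your two tightness steps are exactly the two monotonicity claims. The only difference is that you supply the explicit derivative computation for the cost-monotonicity in $u$ (the ``$ub-P(b)\ge 0$'' bound), which the paper asserts but does not write out.
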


\captionsetup[figure]{labelfont=bf}
\begin{figure*}[t]
    \centering
    \begin{subfigure}[t]{0.19\linewidth}
    \includegraphics[width=\textwidth]{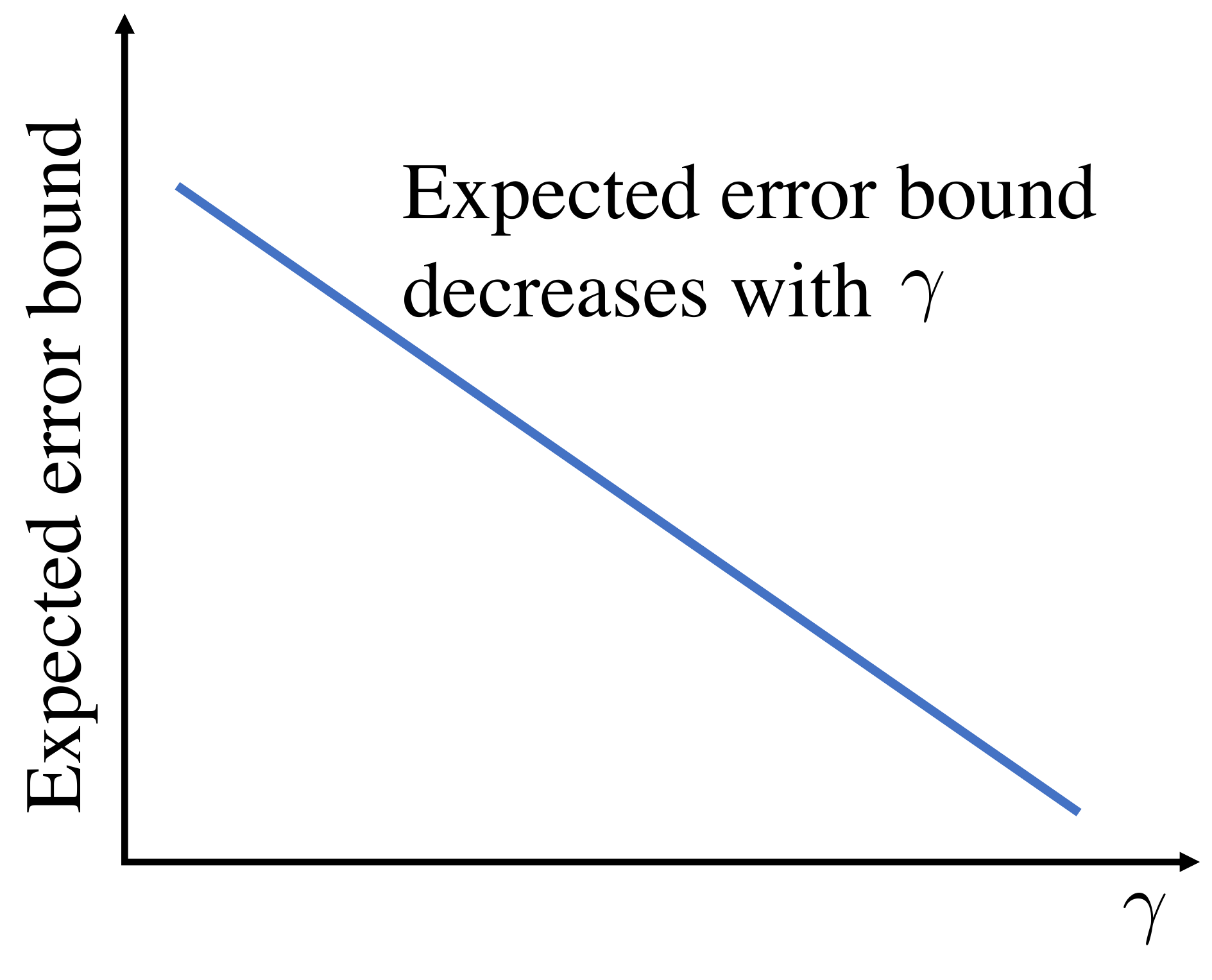}
    \caption{Error-vs-$\gamma$}
    \label{fig:illustration-error-gamma}
    \end{subfigure}
    \begin{subfigure}[t]{0.19\linewidth}
    \includegraphics[width=\textwidth]{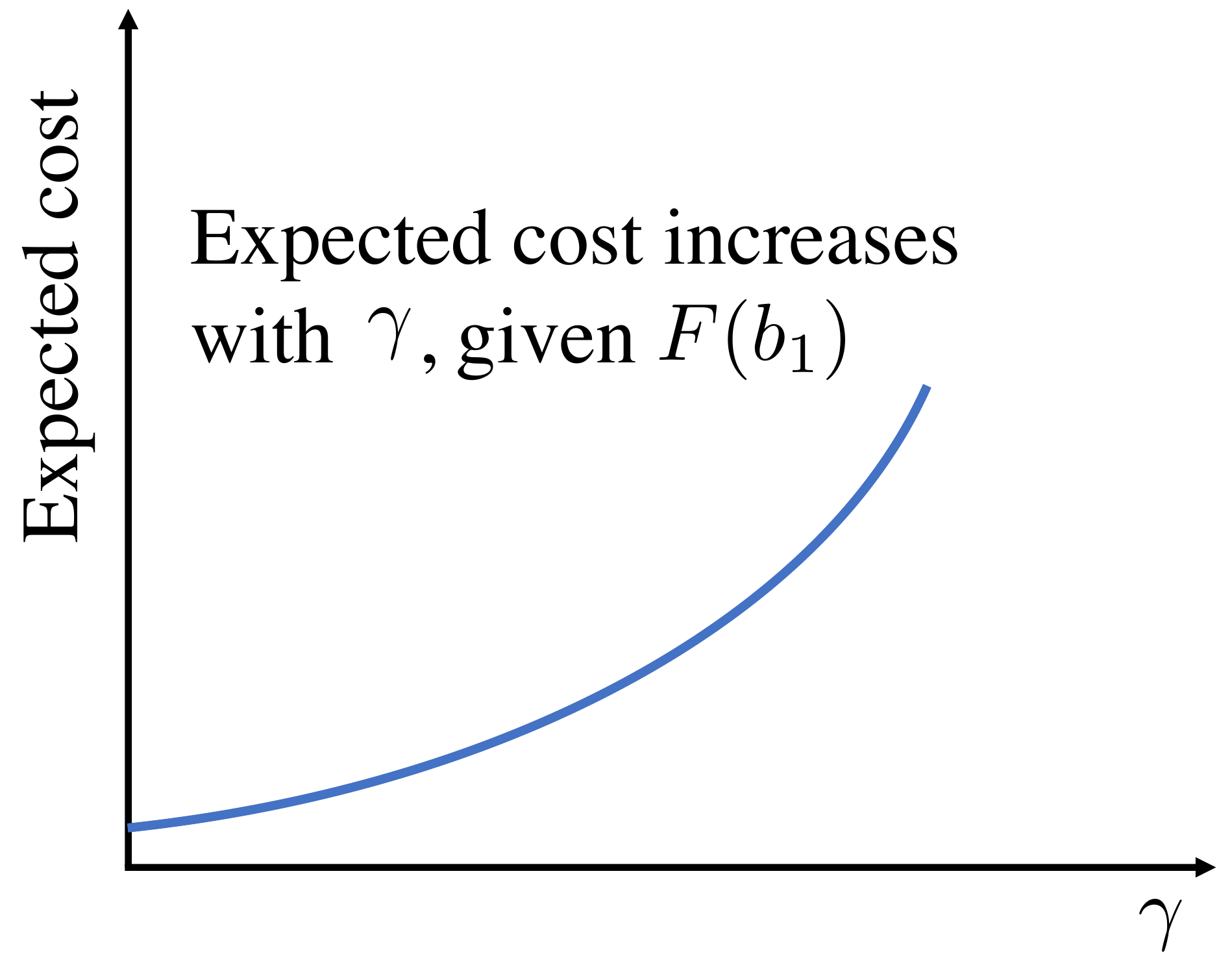}
    \caption{Cost-vs-$\gamma$}
    \label{fig:illustration-cost-gamma}
    \end{subfigure}
    \begin{subfigure}[t]{0.19\linewidth}
    \includegraphics[width=\textwidth]{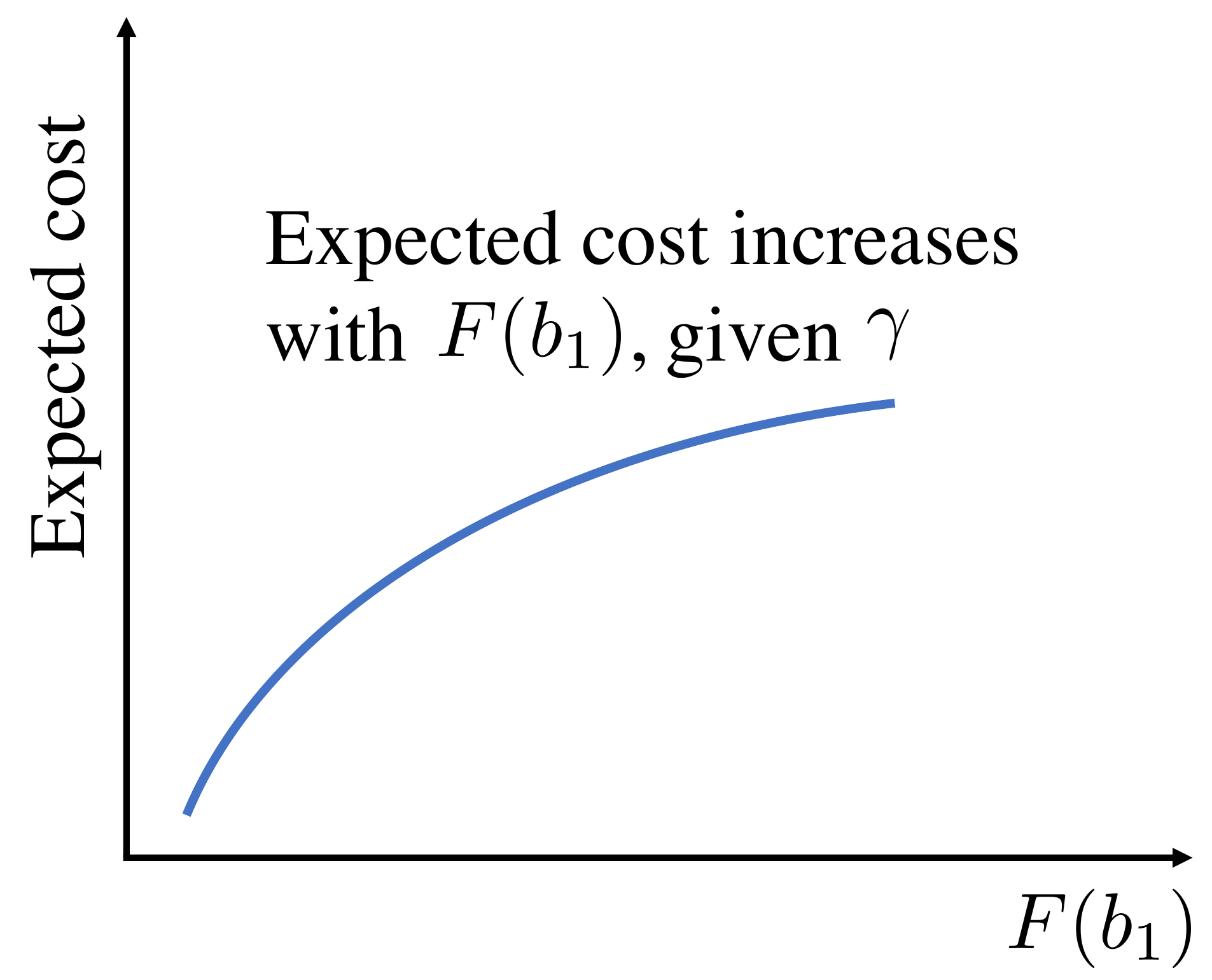}
    \caption{Cost-vs-$F(b_1)$}
    \label{fig:illustration-cost-F-b1}
    \end{subfigure}
    \begin{subfigure}[t]{0.19\linewidth}
    \includegraphics[width=\textwidth]{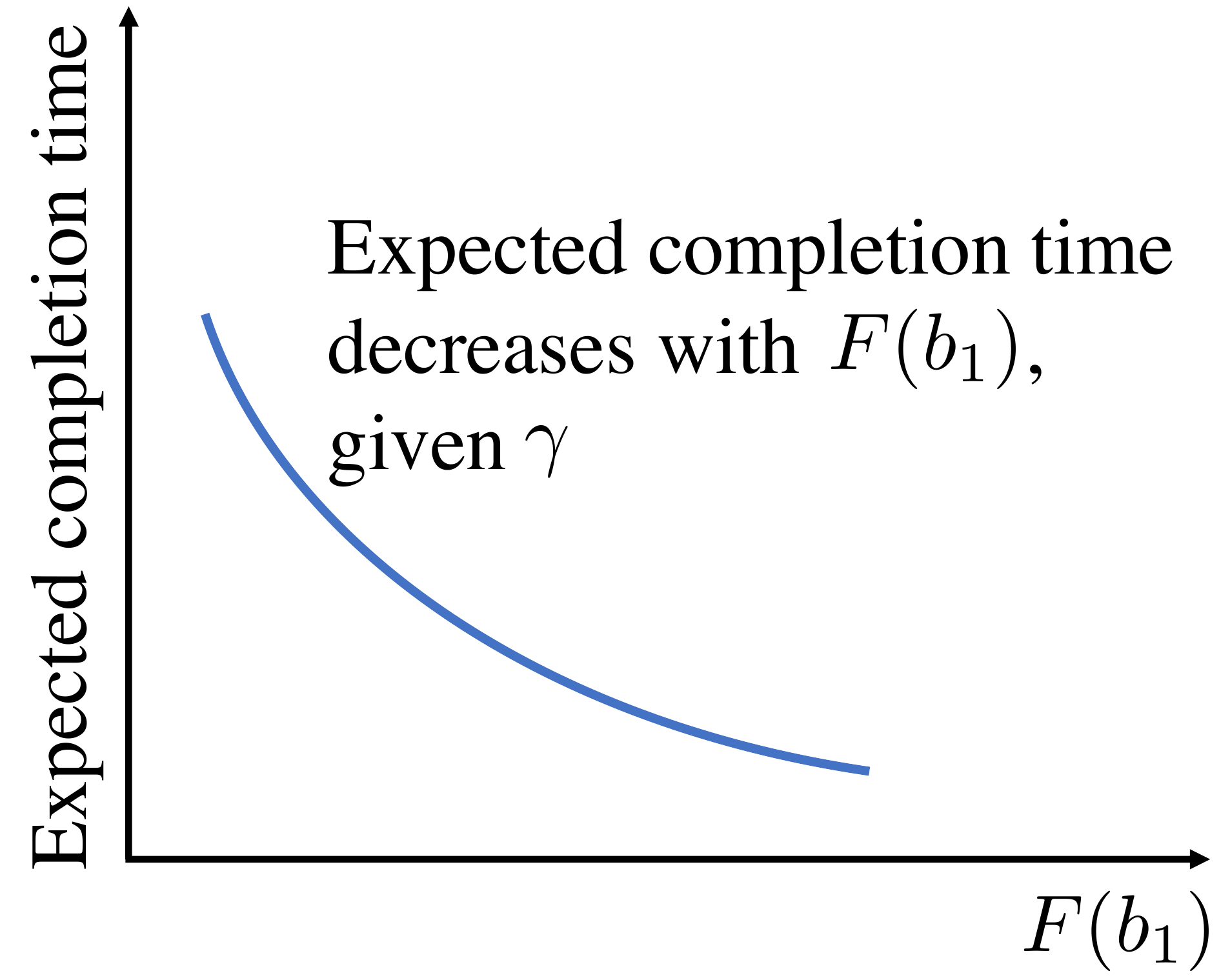}
    \caption{Compl.~time-vs-$F(b_1)$}
    \label{fig:illustration-time-F-b1}
    \end{subfigure}
    \begin{subfigure}[t]{0.19\linewidth}
    \includegraphics[width=\textwidth]{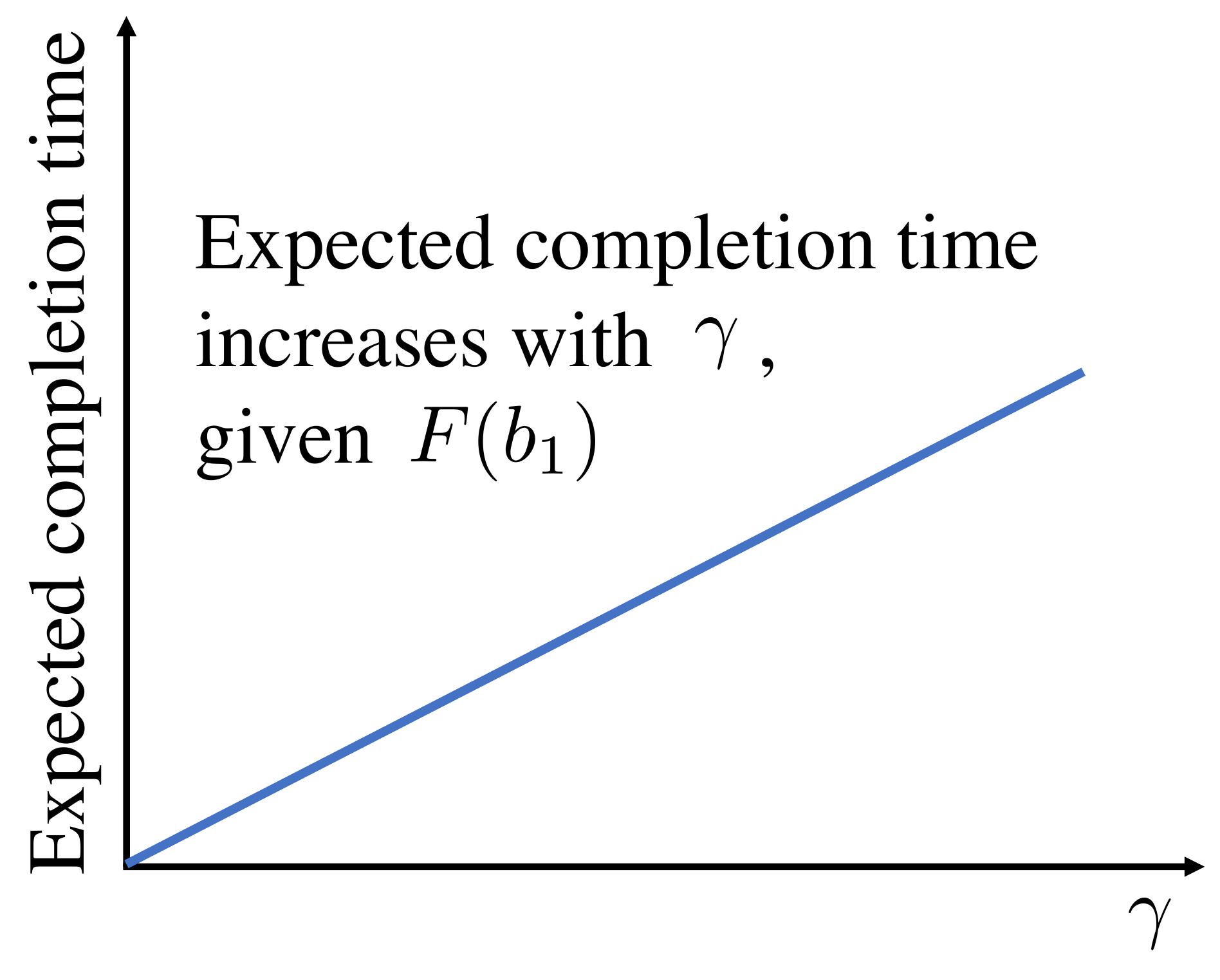}
    \caption{Compl.~time-vs-$\gamma$}
    \label{fig:illustration-time-gamma}
    \end{subfigure}
    \caption{
    Illustration of how the expected cost, completion time and error vary w.r.t. $F(b_1)$ and $\gamma = \frac{F(b_2)}{F(b_1)}$. As a larger $\gamma$ leads to a smaller expected error (Fig.~\ref{fig:illustration-error-gamma}) but a larger expected cost (Fig.~\ref{fig:illustration-cost-gamma}) and completion time (Fig.~\ref{fig:illustration-time-gamma}), and the expected error is only controlled by $\gamma$, the optimal $\gamma$ should be the smallest possible $\gamma$, \ie, the one that yields error $= \epsilon$. The optimal $F(b_1)$ should be the one that yields the completion time equal to the deadline under the optimal $\gamma$ (Fig.~\ref{fig:illustration-time-F-b1}).}
    \label{fig:illustration-optimal-two-bids}
\end{figure*}
For brevity, we use Figure \ref{fig:illustration-optimal-two-bids} to illustrate our proof of Theorem \ref{thm:opt-two-bids-fully-sync}. The key steps are: (i) change the variables of the optimization problem \eqref{obj:fully-sync} to be $F(b_1)$ and $\gamma = \frac{F(b_2)}{F(b_1)}$; (ii) show that the expected cost, completion time, and error are monotonic w.r.t. to $F(b_1)$ and $\gamma$. Intuitively, the expected error should depend only on the number of active workers \emph{given that some workers are active}, which is controlled by the relative difference between $F(b_1)$ and $F(b_2)$: $\gamma$. Formally, the error bound decreases with $\expect*{y(\vec{b})^{-1}}$. Applying $\expect*{y(\vec{b})^{-1}} = \frac{1}{F(b_1)}\left(\frac{F(b_1)-F(b_2)}{n_1} + \frac{F(b_2)}{n}\right)= \frac{1}{n_1} - \frac{1}{\gamma} \left(\frac{1}{n_1} - \frac{1}{n}\right)$ to \eqref{eq:Qe} gives us the optimal $\gamma$, since the expected cost increases with both $F(b_1)$ and $\gamma$. We then choose $F(b^*_1)$ to the one that yields $\expect*{\tau} = \theta$ (tight \eqref{constraint:ddl-fully-sync}). Intuitively, $F(b^*_1)$ should be high enough to guarantee that some workers are active often enough that the job completes before the deadline. 


{\bf Co-optimizing $n_1$ and $\vec{b}$.} 
If $n_1$ is not a known input but a variable to be co-optimized with $\vec{b}$, we can write $n_1$ and $b^*_2$ in terms of $F(b^*_1)$ according to \eqref{eq:opt-two-bids-fully-sync} and plug them into \eqref{obj:fully-sync}-\eqref{constraint:bids-fully-sync} to solve for $b^*_1$ first, and then derive $b_2^*$ and the optimal $n_1$. 

{\bf Co-optimizing $J$ and $\vec{b}$.} Taking $J$ as an optimization variable may allow us to further reduce the job's cost. For instance, allowing the job to run for more iterations, \ie, increasing $J$, increases $Q(\epsilon)$ (the right-hand side of ~\eqref{eq:Qe}). We can then increase $\expect*{\frac{1}{y(\vec{b})} }$ by submitting lower bids $b_2$, making it less likely that workers $n_1 + 1,\ldots,n$ will be active, while still satisfying \eqref{eq:Qe}. A lower $b_2$ may decrease the expected cost by making workers less expensive, though this may be offset by the increased number of iterations. To co-optimize $J$, we show it is a function of $\vec{b}$ and $\epsilon$: 
\begin{cor}[Relationship of $J$ and $\vec{b}$]\label{lem:J-b-epsilon}
To guarantee a training error $\leq\epsilon$, the number of iterations $J$ should be at least
\begin{align}\label{eq:J-b-epsilon}
    &J = \log_{(1-\alpha c\mu)} \frac{\epsilon - \frac{\alpha LM}{2c\mu} \expect*{\frac{1}{y(\vec{b})}}  }{\expect*{G(\mathbf{w}_0)} - \frac{\alpha LM}{2c\mu} \expect*{\frac{1}{y(\vec{b})}}}.
\end{align}
\end{cor}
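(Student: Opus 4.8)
The plan is to specialize the general error bound of Theorem~\ref{thm:error-cv} to the heterogeneous-bid setting and then invert it to solve for $J$. In this setting the number of active workers $y_j$ is i.i.d.\ across iterations with common law $y(\vec{b})$, so $\expect*{1/y_j} = \expect*{1/y(\vec{b})}$ is the same for every $j$ and can be factored out of the sum in \eqref{eq:error-cv}. The remaining geometric series satisfies $\sum_{j=1}^{J}(1-\alpha c\mu)^{J-j} = \bigl(1-(1-\alpha c\mu)^J\bigr)/(\alpha c\mu)$, which is valid since $0<\alpha c\mu<1$ for the admissible step size. Substituting gives the closed form
\begin{align*}
\mathbb{E}\!\left[G(\mathbf{w}_J)-G^*\right]\le (1-\alpha c\mu)^J\expect*{G(\mathbf{w}_0)}+\frac{\alpha LM}{2c\mu}\expect*{\frac{1}{y(\vec{b})}}\bigl(1-(1-\alpha c\mu)^J\bigr).
\end{align*}

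Next I would enforce the error constraint by requiring this right-hand side to be at most $\epsilon$. Abbreviating $B\triangleq\frac{\alpha LM}{2c\mu}\expect*{1/y(\vec{b})}$ and grouping the $(1-\alpha c\mu)^J$ terms yields $(1-\alpha c\mu)^J\bigl(\expect*{G(\mathbf{w}_0)}-B\bigr)\le \epsilon-B$. In the regime where the target $\epsilon$ lies strictly between the asymptotic error floor $B$ and the initial suboptimality $\expect*{G(\mathbf{w}_0)}$ (the regime in which the statement is meaningful; outside it $J=0$ trivially suffices), both $\epsilon-B>0$ and $\expect*{G(\mathbf{w}_0)}-B>0$, so I may divide to get $(1-\alpha c\mu)^J\le(\epsilon-B)/(\expect*{G(\mathbf{w}_0)}-B)$ with the ratio lying in $(0,1)$. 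Taking $\log_{(1-\alpha c\mu)}$ of both sides, and noting that this base is in $(0,1)$ so the logarithm is order-reversing, flips the inequality and produces exactly \eqref{eq:J-b-epsilon}. This is the natural counterpart of \eqref{eq:Qe}: there we solved the same inequality for $\expect*{1/y(\vec{b})}$ with $J$ fixed, here we solve it for $J$ with the workers fixed.

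The argument is entirely elementary; the only delicate point is the bookkeeping that makes the logarithm well defined — verifying that $(\epsilon-B)/(\expect*{G(\mathbf{w}_0)}-B)\in(0,1)$, which is precisely where the hypotheses $B<\epsilon<\expect*{G(\mathbf{w}_0)}$ are used — together with the sign reversal incurred by taking a logarithm to a base less than one. One should also observe that the closed-form bound above is decreasing in $J$ whenever $\expect*{G(\mathbf{w}_0)}>B$, so any integer $J$ at least as large as the value in \eqref{eq:J-b-epsilon} (formally, its ceiling) still meets the error target, which justifies the ``at least'' in the statement.
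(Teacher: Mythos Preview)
Your proposal is correct and is exactly the intended derivation: the paper states this result as a corollary of Theorem~\ref{thm:error-cv} without giving a separate proof, and your argument---specialize \eqref{eq:error-cv} to i.i.d.\ $y_j$, collapse the geometric series to $\bigl(1-(1-\alpha c\mu)^J\bigr)/(\alpha c\mu)$, then invert the resulting inequality for $J$---is precisely how one unpacks it. Your explicit attention to the sign conditions ($B<\epsilon<\expect*{G(\mathbf{w}_0)}$) and to the inequality reversal under $\log_{(1-\alpha c\mu)}$ is more careful than the paper itself, which leaves these points implicit.
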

For brevity, we show the idea of co-optimizing $J$ and $\vec{b}$:
We first replace $J$ in \eqref{obj:fully-sync} and \eqref{constraint:ddl-fully-sync} by \eqref{eq:J-b-epsilon}. Constraint \eqref{constraint:error-fully-sync} is already guaranteed by \eqref{eq:J-b-epsilon} and can be removed. We then solve for the remaining optimization variables, the bids $\vec{b}$. 

\section{Optimal Number of Preemptible Instances}
\label{sec:preempt}

In this section, we consider preemptible instances offered by other cloud platforms, \eg, low priority VMs from Microsoft Azure~\cite{low_priority_azure} and preemptible instances from Google Cloud Platform~\cite{gcp}. Unlike spot instances where users can specify the maximum prices they are willing to pay, on these platforms users can only decide the number of provisioned instances to request in each iteration, as well as the number of iterations. Therefore, in this section, we choose to optimize the number of instances (workers) and assume the instance price is stable during the entire training time~\cite{gcp}. 
To better quantify the relationship between the number of active workers $y_j$ and the number of provisioned workers $n$, we consider the two preemption distributions in Lemma~\ref{lem:distributions_preemption}.
We will make use of the fact that for both distributions, there exists a parameter $\chi>0$ such that $\expect*{\frac{1}{y_j}} \leq O\left(\frac{1}{n^{\chi}}\right)$.
The problem of minimizing the job cost is then equivalent to minimizing $\mathbb{E}\left[\sum_{j=1}^J y_j R\left(y_j\right) \right]$, subject to the completion time and error constraints. 

\begin{lem}[Example Distributions of $y_j$]
\label{lem:distributions_preemption}
If the number of active workers $y_j$ follows a uniform distribution $\mathbb{P}[y_j = k] = \frac{1}{n_j}, \forall k = 1, \cdots, n_j$, we have $\expect*{\frac{1}{y_j}} \le O\left(n_j^{\frac{-1}{2}}\right)$; if each worker is preempted with probability $q$ each iteration, we have $\expect*{\frac{1}{y_j}} \le O\left(\frac{1}{n_j^{\chi}}\right)$, where there exists a $\chi \in (0,1)$.
\end{lem}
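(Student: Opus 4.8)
The plan is to bound $\expect*{1/y_j}$ directly for each of the two distributions, so that in both cases we recover a bound of the form $O(1/n_j^{\chi})$ for an appropriate exponent $\chi>0$ (namely $\chi=1/2$ in the uniform case and some $\chi\in(0,1)$ in the per-worker-preemption case). I would first dispose of the easier uniform case and then handle the i.i.d.\ Bernoulli preemption case, which is where the real work lies.

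\textbf{Uniform case.} Here $y_j$ is uniform on $\{1,\dots,n_j\}$, so $\expect*{1/y_j} = \frac{1}{n_j}\sum_{k=1}^{n_j}\frac{1}{k} = \frac{H_{n_j}}{n_j}$, where $H_{n_j}$ is the $n_j$-th harmonic number. Since $H_{n_j} = \Theta(\log n_j)$, this gives $\expect*{1/y_j} = \Theta\!\left(\frac{\log n_j}{n_j}\right)$, which is certainly $O(n_j^{-1/2})$ (indeed $O(n_j^{-1+\delta})$ for any $\delta>0$). So the exponent $1/2$ stated in the lemma is a clean, loose-but-true bound; one line with the harmonic-sum estimate $\log n_j \le H_{n_j} \le 1+\log n_j$ finishes it.

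\textbf{Bernoulli preemption case.} Here each of the $n_j$ provisioned workers survives independently with probability $1-q$, so the number $Y$ of survivors is $\mathrm{Binomial}(n_j,1-q)$, but we must condition on $Y\ge 1$ since an iteration with zero active workers is not counted. Thus $y_j \stackrel{d}{=} (Y \mid Y\ge 1)$ and $\expect*{1/y_j} = \expect*{\mathbf{1}\{Y\ge1\}/Y}/\mathbb{P}[Y\ge1]$. The standard trick is to use the identity $\expect*{\mathbf{1}\{Y\ge1\}/Y} \le \frac{1}{(n_j+1)(1-q)}\bigl(1-q^{n_j+1}\bigr) \le \frac{1}{(n_j+1)(1-q)}$ (obtained from $\frac{1}{Y}\le \frac{2}{Y+1}$ for $Y\ge1$, or directly from the known formula $\expect*{1/(Y+1)} = \frac{1-q^{n_j+1}}{(n_j+1)(1-q)}$ for a binomial). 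The denominator $\mathbb{P}[Y\ge1] = 1-q^{n_j}$ is bounded below by a constant (e.g.\ $1-q$) for all $n_j\ge1$. Combining, $\expect*{1/y_j} \le \frac{1}{(1-q)(1-q)(n_j+1)} = O(1/n_j)$, which is $O(1/n_j^{\chi})$ with $\chi=1$; but since the lemma only claims existence of some $\chi\in(0,1)$, we can take any $\chi<1$, e.g.\ $\chi = 1/2$, and the $O(1/n_j)$ bound a fortiori gives $O(1/n_j^{\chi})$. Alternatively, if one prefers an elementary route avoiding the exact binomial identity, a Chernoff bound shows $Y \ge (1-q)n_j/2$ with probability $1-e^{-\Omega(n_j)}$, and on the complement we use the trivial bound $1/y_j\le 1$; splitting the expectation over these two events gives $\expect*{1/y_j}\le \frac{2}{(1-q)n_j} + e^{-\Omega(n_j)} = O(1/n_j)$.

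\textbf{Main obstacle.} The only genuinely delicate point is correctly handling the conditioning on $Y\ge1$ in the Bernoulli case: one must verify that $\mathbb{P}[Y\ge1]$ is bounded away from $0$ uniformly in $n_j$ (true since $q<1$ and $\mathbb{P}[Y\ge1]=1-q^{n_j}\ge 1-q$), so that dividing by it does not destroy the polynomial decay. Everything else is a routine application of either the exact binomial moment formula for $1/(Y+1)$ or a Chernoff/concentration estimate, together with the harmonic-number asymptotics in the uniform case. I would present the uniform computation explicitly and give the Bernoulli bound via the $\expect*{1/(Y+1)}$ identity, remarking that the stated exponent $\chi$ can be taken to be $1/2$ (or indeed anything in $(0,1)$) since our bounds are in fact of order $n_j^{-1}$ up to logarithmic factors.
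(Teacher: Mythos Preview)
Your proposal is correct. The uniform case is handled identically to the paper: both compute $\expect{1/y_j}=H_{n_j}/n_j\le(\ln n_j+1)/n_j$ and then coarsen to $O(n_j^{-1/2})$.

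For the Bernoulli/preemption case, both you and the paper rely on the closed-form binomial identity $\expect{1/(Y+1)}=\frac{1-q^{n_j+1}}{(n_j+1)(1-q)}$, but you connect it to $\expect{1/y_j}$ differently. The paper bounds the \emph{difference} $\bigl|\expect{1/y_j}-\expect{1/(y_j+1)}\bigr|$ by splitting the sum $\sum_y \frac{1}{y(y+1)}\mathbb{P}[Y=y]$ at the threshold $y=n_j^{\gamma}$ and estimating each piece separately, ending up with a bound $O(n_j^{-(2\gamma-1)})$ for $\gamma\in(1/2,1)$; this is why the paper only claims some $\chi\in(0,1)$. Your route is more direct: you use the pointwise inequality $1/Y\le 2/(Y+1)$ (valid for $Y\ge1$) to get $\expect{\mathbf{1}\{Y\ge1\}/Y}\le 2\,\expect{1/(Y+1)}$, divide by $\mathbb{P}[Y\ge1]=1-q^{n_j}\ge 1-q$, and conclude $\expect{1/y_j}=O(1/n_j)$. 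Your Chernoff alternative is equally valid. Either way you obtain a strictly stronger bound than the paper's (order $n_j^{-1}$ rather than $n_j^{-\chi}$ for $\chi<1$), and the argument is shorter. One small slip: in your displayed inequality you dropped the factor of $2$ coming from $1/Y\le 2/(Y+1)$; this is harmless for the $O(\cdot)$ claim but should be restored in a formal write-up.
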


We find closed-form solutions for the optimal number of workers $n$ and iterations $J$ when $\chi \ge 1$ in Theorem \ref{thm:co-opt_n_J}. Theorem \ref{thm:dynamic_nj} provides an optimization strategy for any $\chi>0$.
\begin{thm}[Co-optimizing $n$ and $J$]
\label{thm:co-opt_n_J}
Suppose $\expect*{y_j} \propto n$ and $\expect*{\frac{1}{y_j} } \le \frac{d}{n}~(d>0)$, the probability of no active workers does not depend on $n$, and the runtime per iteration is deterministic. Then the completion time constraint \eqref{eq:simp-ect-constraint} is simply $J\le \theta \delta$ where $\delta$ is a constant, and the optimal $J$ and $n$ (denoted by $J^*$ and $n^*$) satisfy: 
\begin{align*}
    J^* &= \min\left\{\argmin_{J \in \{J_1, J_2\}} \frac{BJ(1 -\beta^{J})}{(1-\beta)(\epsilon - A\beta^{J})}, \floor*{\theta \delta}\right\},\\
    J_1 &= \floor*{\tilde{J}}, 
    ~ J_2 = \ceil*{\tilde{J}},
    \frac{A\beta^{\tilde{J}}\left( \tilde{J}\ln{\frac{1}{\beta}} + 1 - \beta^{\tilde{J}} \right)}{1 + \beta^{\tilde{J}}(\tilde{J}\ln{\frac{1}{\beta}} - 1)} = \epsilon,\\
    n^* &= \ceil*{ \frac{B(1 -\beta^{\tilde{J}})}{(1-\beta)(\epsilon - A\beta^{\tilde{J}})}},
\end{align*}
where $\beta = 1 - \alpha c \mu$, $A = \expect*{G(\mathbf{w}_0)}$, and $B=\frac{\alpha^2 L M d}{2}$.
\end{thm}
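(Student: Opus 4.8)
The plan is to combine Corollary~\ref{lem:J-b-epsilon} (the error-iteration relation, now written with $\expect*{1/y_j}\le d/n$) with the cost objective $\mathbb{E}\!\left[\sum_{j=1}^J y_j R(y_j)\right]$ and the simplified completion-time constraint, then reduce the two-variable optimization over $(n,J)$ to a one-variable problem in $J$. First I would plug $\expect*{1/y_j}\le d/n$ into the error bound of Theorem~\ref{thm:error-cv}: writing $\beta = 1-\alpha c\mu$, $A=\expect*{G(\mathbf{w}_0)}$, the bound becomes $A\beta^J + \tfrac{\alpha^2 LM d}{2n}\sum_{j=1}^J \beta^{J-j} = A\beta^J + \tfrac{B}{n}\cdot\tfrac{1-\beta^J}{1-\beta}$ with $B=\tfrac{\alpha^2 LMd}{2}$. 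Setting this equal to $\epsilon$ and solving for $n$ gives the tight relation $n = \tfrac{B(1-\beta^J)}{(1-\beta)(\epsilon - A\beta^J)}$; since cost is increasing in $n$ and the error constraint is the only thing forcing $n$ up, at optimality this holds with equality (up to the integrality ceiling), which is exactly the stated $n^*$. Note this requires $\epsilon > A\beta^J$, i.e. $J$ large enough that the deterministic-decay term already falls below $\epsilon$; this is the implicit feasibility condition.

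Next I would substitute this $n(J)$ into the cost. With deterministic per-iteration runtime and $\expect*{y_j}\propto n$, the per-iteration expected cost $\expect*{y_j R(y_j)}$ is proportional to $n$ (times the idle-time inflation factor, which by assumption does not depend on $n$), so the total expected cost is proportional to $J\cdot n(J) = \tfrac{BJ(1-\beta^J)}{(1-\beta)(\epsilon - A\beta^J)}$. This is the objective $\frac{BJ(1-\beta^J)}{(1-\beta)(\epsilon-A\beta^J)}$ appearing in the theorem. For the completion-time constraint: deterministic runtime $R$ plus the $n$-independent idle inflation makes $\expect*{\tau} = J R /(\text{const})$, i.e. linear in $J$ alone, so $\expect*{\tau}\le\theta$ becomes $J\le\theta\delta$ for the appropriate constant $\delta$; hence the feasible region is simply $J\in\{J : \epsilon > A\beta^J\}\cap\{J\le\lfloor\theta\delta\rfloor\}$, and one takes $\min$ with $\lfloor\theta\delta\rfloor$ at the end.

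It then remains to minimize $h(J) = \tfrac{BJ(1-\beta^J)}{(1-\beta)(\epsilon-A\beta^J)}$ over the positive reals (relaxing integrality), and round. I would differentiate $h$ and set $h'(\tilde J)=0$. Carrying out $\tfrac{d}{dJ}$ and clearing the positive common factors, the stationarity condition collapses — after using $\tfrac{d}{dJ}\beta^J = \beta^J\ln\beta = -\beta^J\ln\tfrac1\beta$ — to $\tfrac{A\beta^{\tilde J}(\tilde J\ln\tfrac1\beta + 1 - \beta^{\tilde J})}{1+\beta^{\tilde J}(\tilde J\ln\tfrac1\beta - 1)} = \epsilon$, precisely the transcendental equation defining $\tilde J$ in the theorem. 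Since $h$ is continuous on the feasible interval, $h\to+\infty$ as $J\downarrow J_{\min}$ (where $\epsilon-A\beta^J\downarrow 0$) and grows roughly linearly as $J\to\infty$, a unique interior minimizer exists; because the true $J$ must be an integer, the optimum is at $J_1=\lfloor\tilde J\rfloor$ or $J_2=\lceil\tilde J\rceil$, whichever gives smaller cost — and one caps it at $\lfloor\theta\delta\rfloor$ to respect the deadline, which is why $J^*=\min\{\argmin_{J\in\{J_1,J_2\}}h(J),\lfloor\theta\delta\rfloor\}$. Finally $n^*$ is $n(\tilde J)$ with a ceiling. The main obstacle is the stationarity algebra: showing the derivative really simplifies to the stated clean form (as opposed to a messier expression) requires care with the quotient rule and the $\beta^J\ln\beta$ terms, and one should double-check whether $h$ is genuinely unimodal on the feasible interval so that comparing only $\{J_1,J_2\}$ (rather than all integers) is justified — convexity of $h$, or at least quasi-convexity, is the cleanest way and may need a short argument via the sign of $h'$.
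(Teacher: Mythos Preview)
Your proposal is correct and follows essentially the same route as the paper's proof: reduce the cost to a multiple of $J\cdot n$, argue the error constraint is tight at the optimum so that $n=n(J)$, substitute to obtain the single-variable objective $h(J)=\tfrac{BJ(1-\beta^J)}{(1-\beta)(\epsilon-A\beta^J)}$, set $h'(\tilde J)=0$ to get the stated transcendental equation, and then round and cap at $\lfloor\theta\delta\rfloor$. Your justification is in fact more explicit than the paper's on two points---why the cost is proportional to $J\cdot n$ under the stated assumptions, and the need for unimodality of $h$ so that comparing only $\lfloor\tilde J\rfloor$ and $\lceil\tilde J\rceil$ suffices (the paper simply asserts that the left-hand side of the stationarity equation is monotone in $\tilde J$).
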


{\bf A Strategy with Dynamic Numbers of Workers.}~While Theorem~\ref{thm:co-opt_n_J} gives us the exact optimal expression for $n$ when the provisioned number of workers is fixed over iterations, ML practitioners often increase the number of workers over time~\cite{icml_increase_nodes, vldb_increase_machines, iclr_backup_workers}. 
Intuitively, in the later stages of the model training the parameter values are closer to convergence, and thus it is crucial that the gradient updates are accurate, i.e., averaged over a larger number of worker mini-batches. More formally, we observe in Theorem \ref{thm:error-cv} that $\expect*{\frac{1}{y_j}}$'s contribution to the error bound increases exponentially with $j$ by $\frac{1}{1-\alpha c \mu}$. 

Inspired by these observations, we propose to decrease $\expect*{\frac{1}{y_j}}$ over iterations by controlling the provisioned number of workers: we dynamically set the number of workers to be $n_j = \ceil*{n_0 \eta^{j-1}}$ for each iteration $j$ and some $\eta > 1$; we show how to optimize the value of $\eta$ below. One can similarly exponentially increase the batch size of each worker while using the same number of workers over iterations~\cite{icml_exp_batch}, 
but doing so will exponentially increase the runtime of each iteration. We prove in Theorem \ref{thm:dynamic_nj} that our dynamic strategy achieves the same error convergence rate and a better asymptotic error bound with a significantly smaller number of iterations than using a static number of workers during the entire training.

\begin{thm}[Error with Dynamic Workers]
\label{thm:dynamic_nj}
Suppose the number of active workers $y_j$ satisfies $\expect*{\frac{1}{y_j}} \le O\left(\frac{1}{n_j^{\chi}}\right)$ for some $\chi \ge 0$. 
Then for any $\eta > 1$ and $J$ sufficiently large, provisioning $\ceil*{n_0 \eta^{j-1}}$ workers in iteration $j$ and running  
SGD for $\ceil*{\log_{\eta^{\chi}} \left( 1+ (\eta -1)J\right)}$ iterations achieves an error bound no larger than provisioning $n_0$ workers for $J$ iterations.
\end{thm}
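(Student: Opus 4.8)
The plan is to compare the error bound from Theorem~\ref{thm:error-cv} under the dynamic schedule $n_j = \ceil*{n_0 \eta^{j-1}}$ against the static schedule $n_j \equiv n_0$, and show that the dynamic schedule needs only $J' = \ceil*{\log_{\eta^{\chi}}(1 + (\eta-1)J)}$ iterations to match the static schedule's $J$-iteration bound. First I would substitute the hypothesis $\expect*{1/y_j} \le O(n_j^{-\chi})$ into the sum appearing on the right-hand side of~\eqref{eq:error-cv}. For the dynamic schedule this gives a term proportional to $\sum_{j=1}^{J'} (1-\alpha c\mu)^{J'-j} n_j^{-\chi}$, and since $n_j^{-\chi} = \ceil*{n_0\eta^{j-1}}^{-\chi} \le n_0^{-\chi}\eta^{-\chi(j-1)}$, I would bound this sum by $n_0^{-\chi}\sum_{j=1}^{J'}(1-\alpha c\mu)^{J'-j}\eta^{-\chi(j-1)}$, a geometric-type sum in which the per-iteration decay is amplified by the extra factor $\eta^{-\chi}$. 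The first (exponential) term of~\eqref{eq:error-cv}, namely $(1-\alpha c\mu)^{J'}\expect*{G(\mathbf{w}_0)}$, is already smaller under the dynamic schedule than the static one precisely because $J' $ needs to be large enough that $(1-\alpha c\mu)^{J'} \le (1-\alpha c\mu)^{J}$ — but in fact we will pick $J'$ so that the \emph{whole} bound matches, so I would carry both terms along.

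The key algebraic step is to choose $J'$ so that the dynamic bound is at most the static bound. The static bound's noise term is $\frac{1}{2}\alpha^2 L M n_0^{-1} \sum_{j=1}^{J}(1-\alpha c\mu)^{J-j}$, which for large $J$ is approximately $\frac{\alpha^2 L M n_0^{-1}}{2\alpha c\mu}$ (the geometric series converges). For the dynamic schedule, when $\eta^{\chi} > 1$ the dominant contribution to $\sum_{j=1}^{J'}(1-\alpha c\mu)^{J'-j}\eta^{-\chi(j-1)}$ comes from the largest $j$, i.e. $j = J'$, giving roughly $\eta^{-\chi(J'-1)}$ up to a constant depending only on $\eta^{\chi}$ and $1-\alpha c\mu$. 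Setting $\eta^{-\chi(J'-1)} \asymp n_0^{\chi-1}\cdot(\text{static noise level})$ and solving for $J'$, or more directly matching $\eta^{\chi(J'-1)} \gtrsim 1 + (\eta-1)J$, yields exactly $J' = \ceil*{\log_{\eta^{\chi}}(1+(\eta-1)J)}$. I would present this by writing the dynamic noise term $\le \frac{\alpha^2 L M}{2}\cdot\frac{1}{n_0^{\chi}}\cdot\frac{\eta^{-\chi(J'-1)} - (1-\alpha c\mu)^{J'}}{\eta^{-\chi} - (1-\alpha c\mu)}$ (when $\eta^{-\chi} \ne 1-\alpha c\mu$) and checking that plugging in the stated $J'$ drives this below the static value; the "$J$ sufficiently large" hypothesis is used to absorb the ceiling and the lower-order terms, and to ensure $J' < J$ so the claim is nontrivial. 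The "same convergence rate and better asymptotic error bound" remark follows since the dynamic bound's leading term decays like $\eta^{-\chi J'} \asymp (1 + (\eta-1)J)^{-1} \to 0$, beating the static constant floor $\Theta(n_0^{-1})$.

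The main obstacle I expect is handling the regime split on $\chi$: when $\eta^{\chi} > 1$ the geometric sum is dominated by its last term as described, but I should also confirm the argument degrades gracefully as $\chi \to 0$ (where $\eta^{\chi} \to 1$ and $\log_{\eta^{\chi}}$ blows up, consistent with dynamic scaling giving no benefit when extra workers don't help), and check the boundary case $\eta^{-\chi} = 1-\alpha c\mu$ separately where the closed-form geometric sum becomes $J'(1-\alpha c\mu)^{J'-1}$. A secondary technical nuisance is the ceiling $\ceil*{n_0\eta^{j-1}}$: the lower bound $\ceil*{n_0\eta^{j-1}} \ge n_0\eta^{j-1}$ is what we want for upper-bounding $\expect*{1/y_j}$, so that direction is clean, but I must make sure the $O(\cdot)$ constant hidden in $\expect*{1/y_j} \le O(n_j^{-\chi})$ is uniform in $j$ (it is, by the hypothesis as stated). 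Everything else is routine manipulation of geometric series and logarithms.
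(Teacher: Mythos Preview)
Your proposal is essentially the paper's proof: substitute both schedules into Theorem~\ref{thm:error-cv}, collapse the noise sum to a closed-form geometric series (the paper writes it via $x = 1/(\eta^{\chi}(1-\alpha c\mu))$, equivalent to your expression up to an index slip), and compare for large $J$, using that the static bound has the positive floor $B/\bigl((1-\beta)n_0\bigr)$ while the dynamic bound, after substituting the stated $J'$, decays to $0$ polynomially in $J$.

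One correction worth flagging: your remark that the first term $(1-\alpha c\mu)^{J'}$ is ``already smaller'' than $(1-\alpha c\mu)^{J}$ is backwards, since $J' = O(\log J) \ll J$; the actual mechanism (which you partly recover in the next clause and in your final paragraph) is that \emph{both} dynamic terms tend to $0$ as $J\to\infty$, so for $J$ sufficiently large the entire dynamic bound drops below the static floor. Your attention to the edge cases $\eta^{-\chi} = 1-\alpha c\mu$ and $\chi\to 0$ goes beyond the paper, which tacitly works under the extra assumption $\eta^{\chi} > (1-\alpha c\mu)^{-1}$.
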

In the proof of Theorem~\ref{thm:dynamic_nj}, we also show that our dynamic strategy achieves an error bound that converges to $0$ asymptotically with $J$, while when using a static number of workers the error bound in Theorem~\ref{thm:error-cv} converges to a positive constant. 

We then optimize $\eta$ to minimize the expected cost, subject to the error and completion time constraints.
If we ignore straggler effects, we can define $\mathbb{E}\left[R(y_j)\right] = R, ~\forall j$. Suppose $z_j$ denotes the number of active workers including the case $z_j=0$, and $z_j$ follows a binomial distribution with parameter $n_j$ and probability $q$ (the probability that each instance is inactive), namely, the probability that $z_j=0$ equals $q^{n_0 \eta^j}$. Assuming $\expect*{y_j} \propto n_j = n_0 \eta^{j-1}$ and $\expect*{\frac{1}{y_j}} \le \frac{d}{n_j^{\chi}}$, our cost minimization problem can be modified as follows.
\begin{align}
    \rm{minimize}_{\eta} \quad
    & (1-\eta^J)/(1-\eta) \label{eq:cost_preempt_2}\\
    \rm{subject~to:} \quad
    & \sum_{j=1}^J R/(1-q^{n_0 \eta^j}) \le \theta  \label{constraint:completion_preempt_2}\\
    & A\beta^{J} + \frac{B\beta^{J-1}\left(1-(\frac{1}{\beta \eta^{\chi} })^J\right)}{n_0^{\chi}\left(1- \frac{1}{\beta \eta^{\chi} }\right)} \le \epsilon \label{eq:constraint_error_2}\\
    & \eta^{\chi} > 1 / \beta \label{eq:constraint_others_preempt_2},
\end{align}
where $\beta = 1-\alpha c \mu $, $A = \mathbb{E}\left[ G(\mathbf{w}_0) \right]$, and $B = \frac{\alpha^2 LM d}{2}$. For any given $J$, both the objective function and constraints are convex functions of $\eta$ (refer to the operations that preserve convexity in \cite{Boyd:cv}). Therefore, we can use standard algorithms for convex optimization to solve for the optimal $\eta$. 

We can capture the effect of straggling workers by replacing the constant per-iteration runtime $R$ in \eqref{eq:simp-ect-constraint} with $\mathbb{E}\left[R(y_j)\right] = \frac{1}{\lambda}\left( \log n_0 + (j-1)\log \eta\right)$ in the completion time constraint \eqref{eq:simp-ect-constraint}. This constraint accounts for the fact that as we have more active workers in each iteration, the per-iteration runtime will likely increase because we need to wait for the slowest worker to finish.
As in the case without stragglers, we then observe that our optimization problem is convex in $\eta$ for each fixed $J$, and moreover that there exists a finite maximum number of iterations $J$ for which (\ref{eq:simp-ect-constraint}) is feasible. Thus, we can jointly optimize the optimal rate of increase in the number of workers, $\eta$, and $J$ by iterating over all possible values of $J$.
\section{Experimental Validation}
\label{sec:experiments}
We evaluate our bidding strategies from Section~\ref{sec:spot-bid} on the CIFAR-10 image classification benchmark dataset, using $J = 5000$ iterations on ResNet-50 \cite{resnet} and $J=10000$ on a small Convolutional Neural Network (CNN) \cite{cnn-nips2012} with two convolutional layers and three fully connected layers; the distributed SGD algorithms under both datasets are implemented based on Ray~\cite{ray} and Tensorflow~\cite{Tensorflow}. We run the former experiments on a local cluster with GPU servers and the latter on Amazon EC2's c5.xlarge spot instances. 


{\bf Choosing the Experiment Parameters.} We set the deadline ($\theta$) to be twice the estimated runtime of using $8$ workers to process $J$ iterations without interruptions. 
We estimate that $Q(\epsilon) \in [\frac{1}{n}, \frac{1}{n_1}]$ for our choices of $\epsilon$ and $J$ ($\epsilon = 0.98$ for ResNet-50 and $\epsilon = 0.65$ for the small CNN), demonstrating the robustness of our optimized strategies to mis-estimations. 
To estimate the probability distribution of the spot prices, 
we first consider two synthetic spot price distributions for the ResNet-50 experiments: a uniform distribution in the range $[0.2, 1]$ and a Gaussian distribution with mean and variance equal to $0.6$ and $0.175$; we draw the spot price when each iteration starts and re-draw it every $4$ seconds after the job is interrupted. We then download the historical price traces of c5.xlarge spot instances using Amazon EC2's DescribeSpotPriceHistory API for the small CNN experiments, demonstrating that our bidding strategy is robust to non-i.i.d~spot prices. 


{\bf Superiority of our Bidding Strategies.} 
We evaluate the bidding strategies with both the optimal single bid price for all workers ({\bf Optimal-one-bid}) and the optimal bid prices for two groups of workers derived in Theorem \ref{thm:opt-two-bids-fully-sync} ({\bf Optimal-two-bids}) against an aggressive {\bf No-interruptions} strategy that chooses a bid price larger than the maximum spot price. To further minimize the expected total cost while guaranteeing a low training/test error, we propose a {\bf Dynamic strategy}, which updates the optimal two bid prices when increasing the total number of workers. More specifically, we initially launch four workers ($n_1 = 2$, $n = 4$) and apply our optimal two bid prices. After completing 4000 iterations, we add four more workers ($n_1 = 4,n = 8$) and re-compute the optimal bids by subtracting the consumed time from the original deadline $\theta$ and taking $J$ to be the number of remaining iterations. One could further divide the training and re-optimization into more stages. Frequent re-optimizing will likely incur significant interruption overheads, but infrequent optimization may reduce the cost with tolerable overhead.

Figures~\ref{fig:cif} and~\ref{fig:data_cif} compare the performance of our strategies on synthetic and real spot prices, respectively. Figures~\ref{fig:cif-acc-cost-uniform} and~\ref{fig:cif-acc-cost-normal} show 
that \emph{our dynamic strategy leads to a lower cost and the no interruptions benchmark to a higher cost for any given accuracy}, compared to the optimal-one-bid and optimal-two-bids strategies. 
In Figures~\ref{fig:cif-cost-uniform} and~\ref{fig:cif-cost-normal}, we 
indicate the cumulative cost as we run the jobs. The  markers indicate the costs where we achieve 98\% accuracy; while the no interruptions benchmark achieves this accuracy much faster, it \emph{costs nearly three times as much as our dynamic strategy and twice as much as our optimal-two-bids strategy}. Figures \ref{fig:data_cif_acc_cost} and \ref{fig:data_cif_cost_time} show that our optimal-one-bid and optimal-two-bids strategies can significantly save cost under the real spot prices while achieving almost the same training accuracy as the no interruptions benchmark. 



{\bf Superiority of Our Choices of the Number of Workers.} To verify our results in Section \ref{sec:preempt}, we simulate {\bf No preemption} by running 2 workers for 10000 iterations without preemption and observe that the final accuracy can approach $63\%$. 
We then suppose instances are preempted with probability $p = 0.5$ and provision $n = 4$ workers for $J=10000$ iterations, using the fact that the optimal $n$ for each fixed $J$ is proportional to $1/(1 - p)$ and aiming to achieve the same accuracy $65\%$. Co-optimizing $n$ and $J$ (Theorem~\ref{thm:co-opt_n_J}) may yield further cost improvements.
Figure \ref{fig:preemption_distributions} shows that using our estimated $n$ achieves a better accuracy per dollar than randomly choosing $n$. We further show in Figure \ref{fig:preemption_strategies} that our strategy {\bf Dynamic $n_j$}, which exponentially increases $n_j$ by a fixed rate $1.0004$ and runs for a much smaller number of iterations set according to Theorem \ref{thm:dynamic_nj}, achieves a better accuracy per dollar, compared with using 1 worker for $J=10000$ iterations ({\bf Static $n=1$}).

\captionsetup[figure]{labelfont=bf}
\begin{figure}
    \centering
    \begin{subfigure}[t]{0.48\linewidth}
    \includegraphics[width=\textwidth]{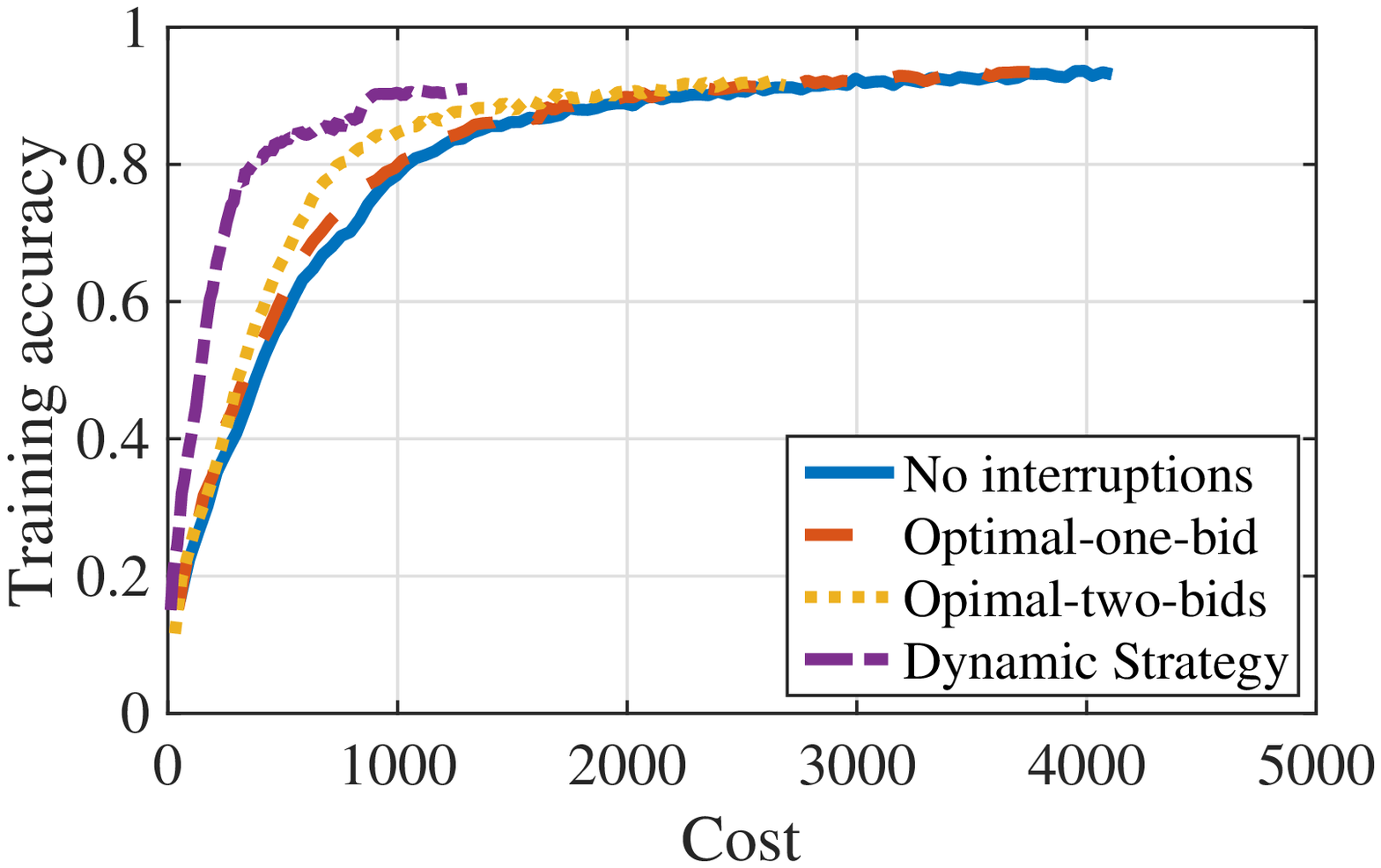}
    \caption{Accuracy-vs-cost, uniform spot price distribution}
    \label{fig:cif-acc-cost-uniform}
    \end{subfigure}
    \hspace{0.01cm}
    \begin{subfigure}[t]{0.48\linewidth}
    \includegraphics[width=\textwidth]{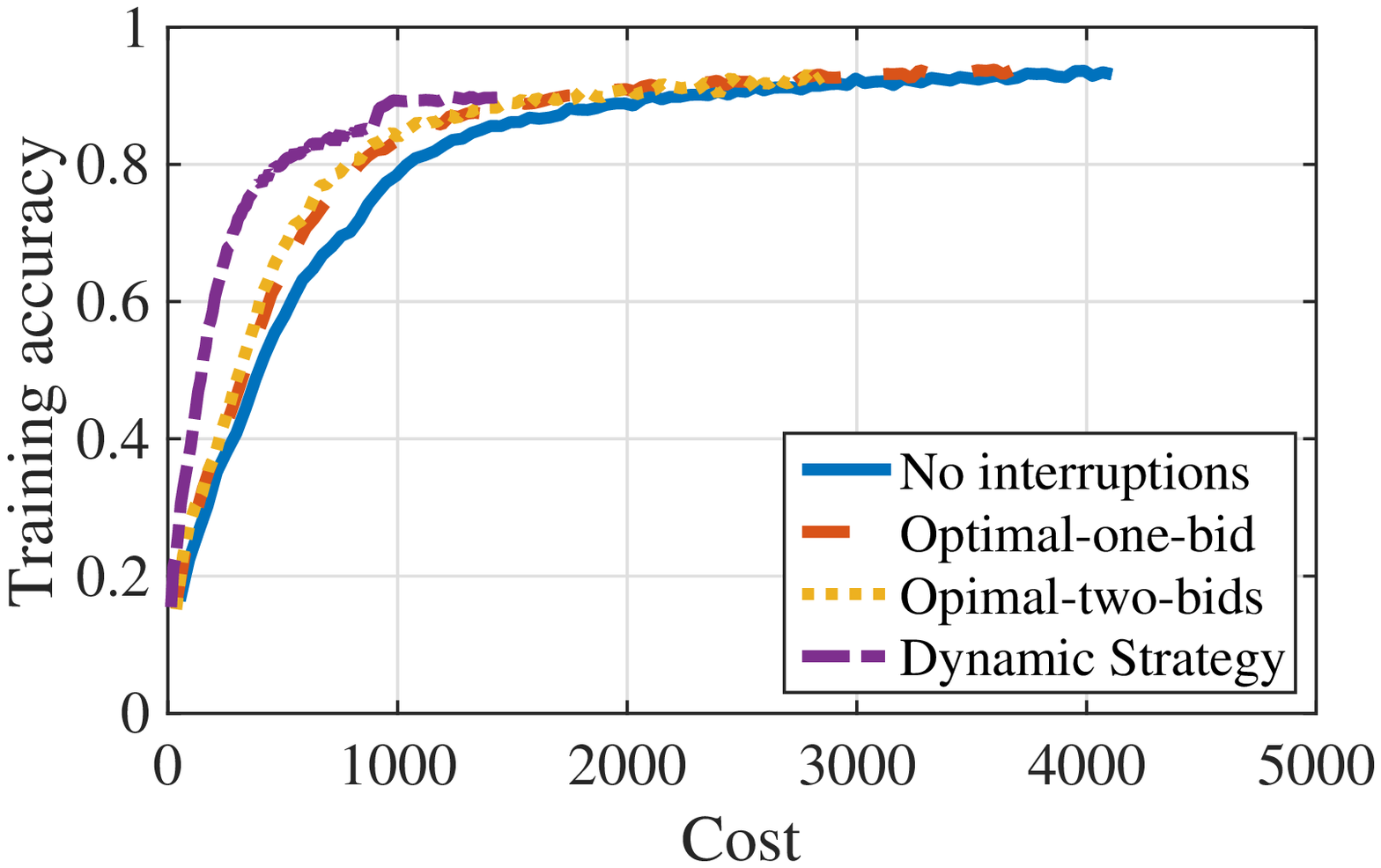}
    \caption{Accuracy-vs-cost, Gaussian spot price distribution}
    \label{fig:cif-acc-cost-normal}
    \end{subfigure}
    \begin{subfigure}[t]{0.48\linewidth}
    \includegraphics[width=\textwidth]{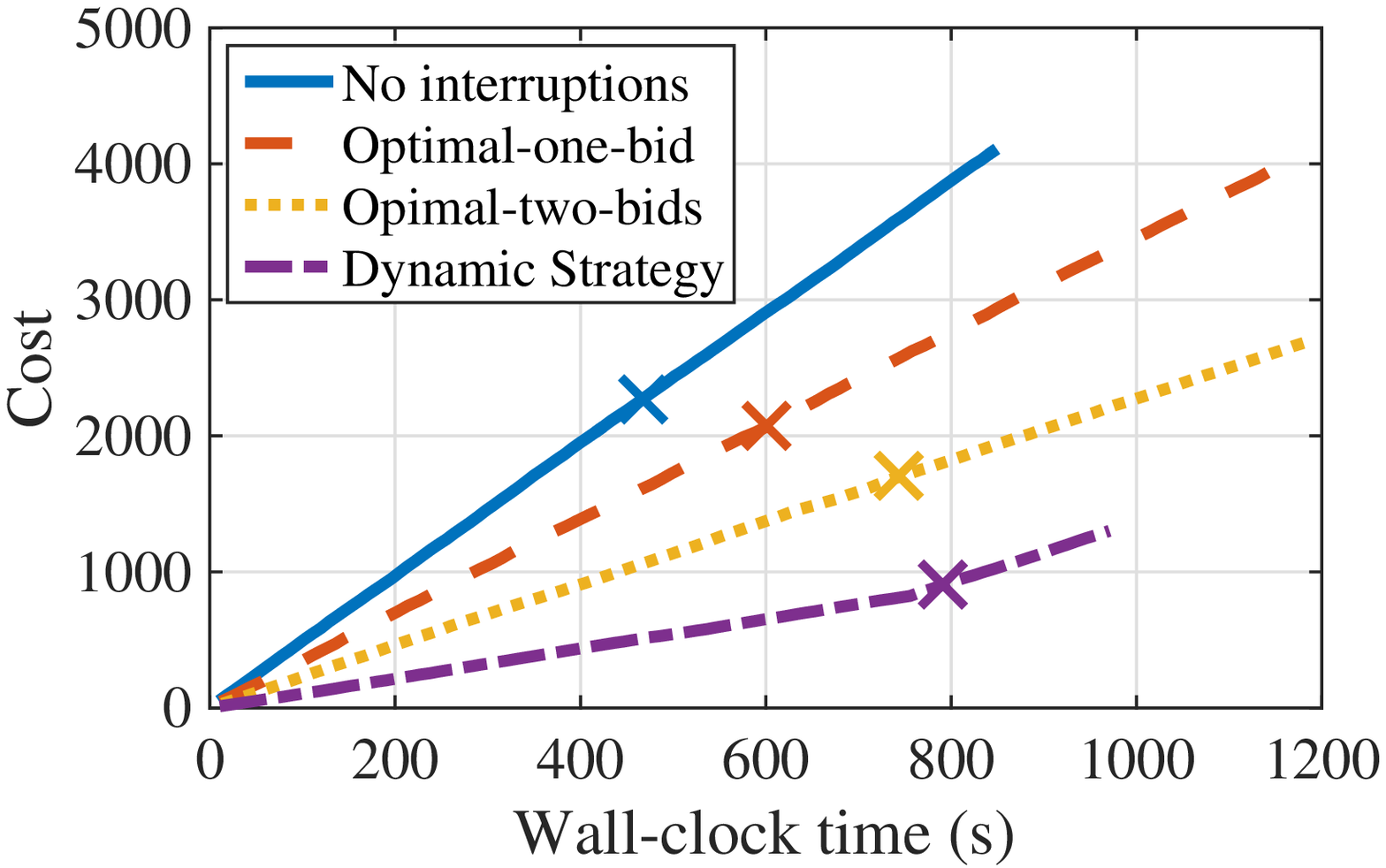}
    \caption{Cost-vs-time, uniform spot price distribution}
    \label{fig:cif-cost-uniform}
    \end{subfigure}
    \hspace{0.01cm}
    \begin{subfigure}[t]{0.48\linewidth}
    \includegraphics[width=\textwidth]{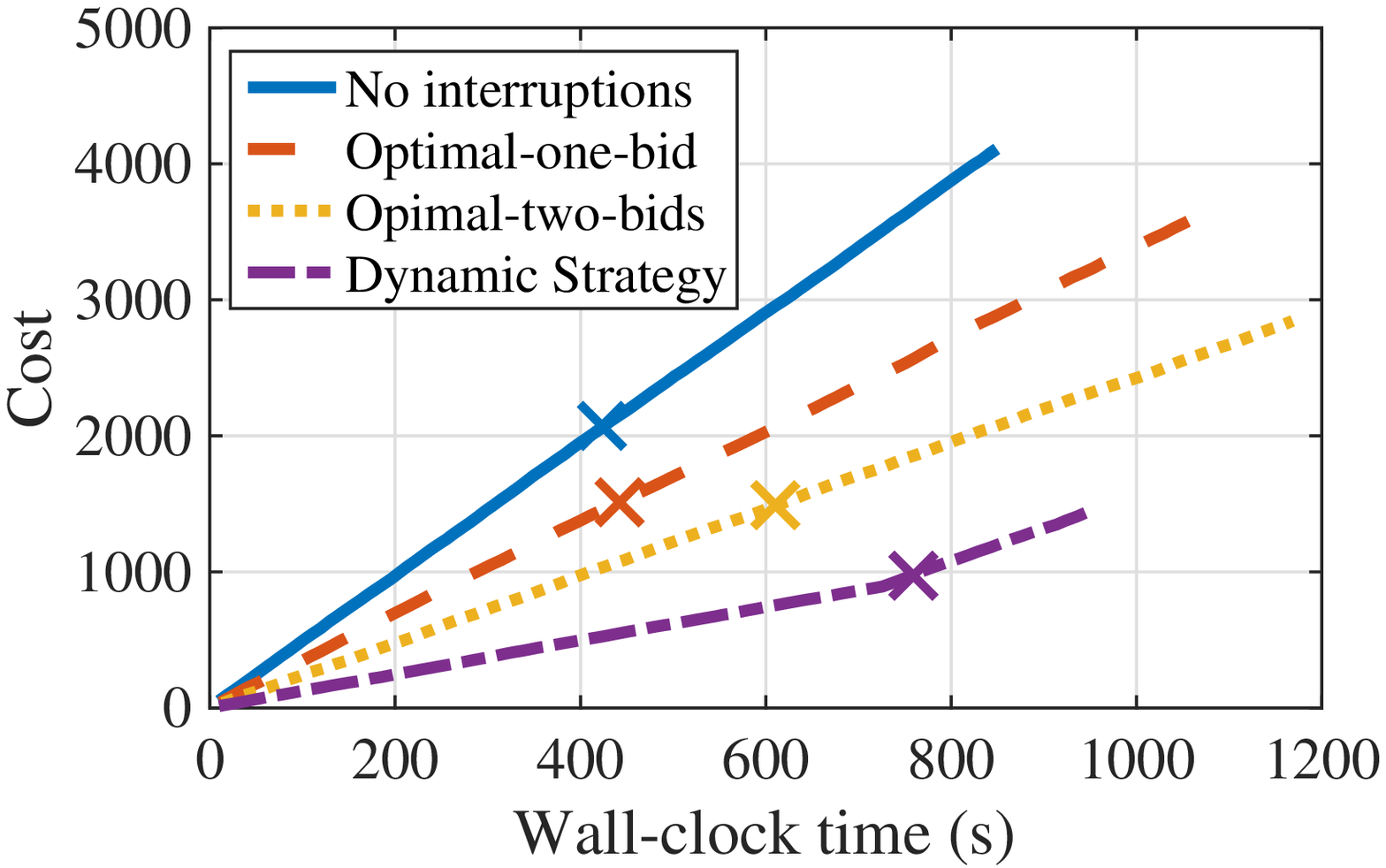}
    \caption{Cost-vs-time, Gaussian spot price distribution}
    \label{fig:cif-cost-normal}
    \end{subfigure}
    \caption{The dynamic strategy (\subref{fig:cif-acc-cost-uniform},\subref{fig:cif-acc-cost-normal}) achieves the highest test accuracy under any given cost under synthetic spot prices. The markers on the curves in (\subref{fig:cif-cost-uniform},\subref{fig:cif-cost-normal}) show the cost when achieving a $98\%$ test accuracy; at which point No-interruptions, Optimal-one-bid, and Optimal-two-bids respectively increase the cost by 134\%, 82\%, 46\% under the uniform distribution, and 103\%, 101\%, 43\% under the Gaussian distribution relative to the dynamic strategy. }
    \label{fig:cif}
\end{figure}

\captionsetup[figure]{labelfont=bf}
\begin{figure}
    \centering
    \begin{subfigure}[t]{0.47\linewidth}
    \includegraphics[width=\textwidth]{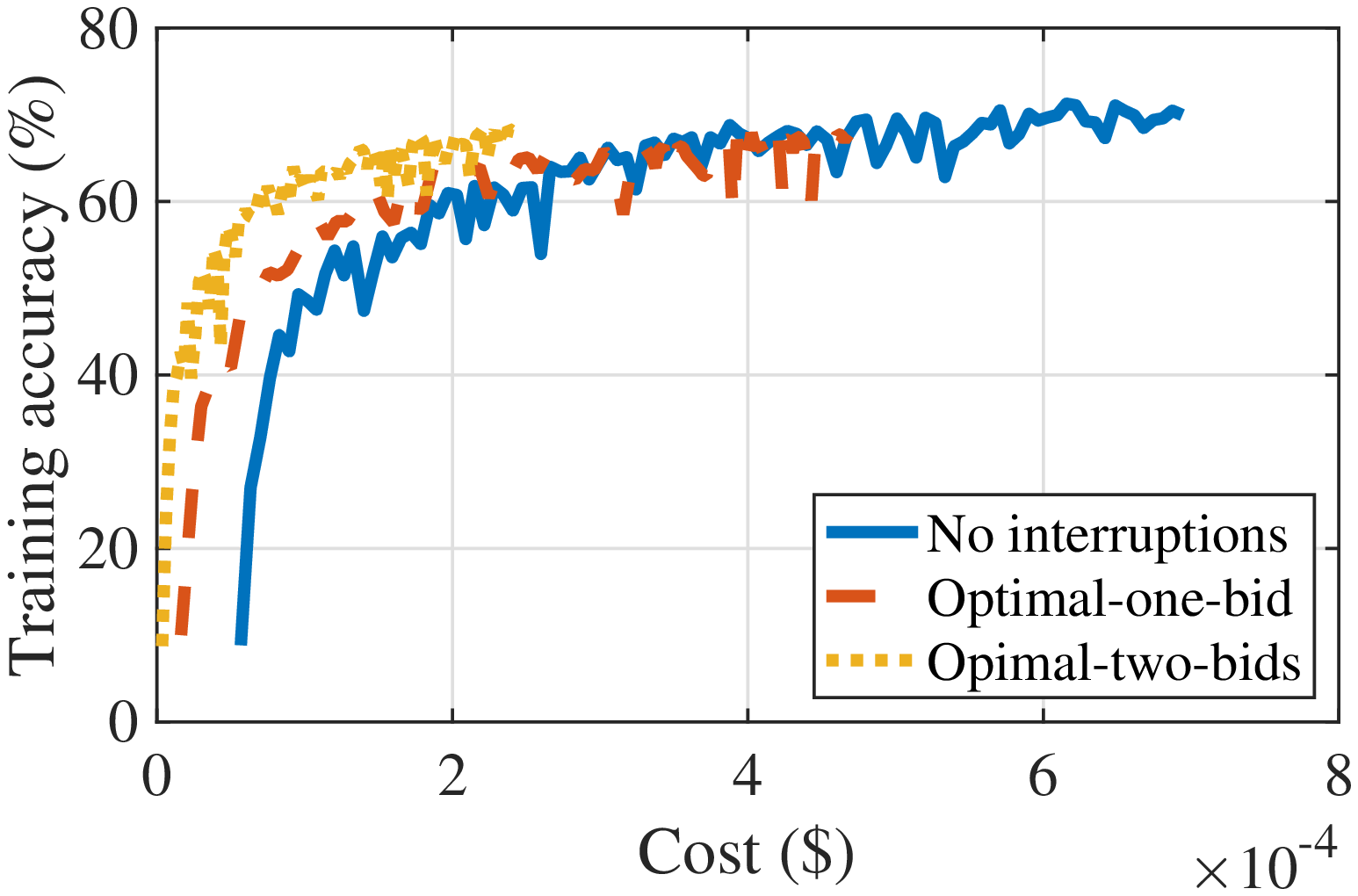}
    \caption{Accuracy-vs-cost}
    \label{fig:data_cif_acc_cost}
    \end{subfigure}
    \hspace{0.03cm}
    \begin{subfigure}[t]{0.47\linewidth}
    \includegraphics[width=\textwidth]{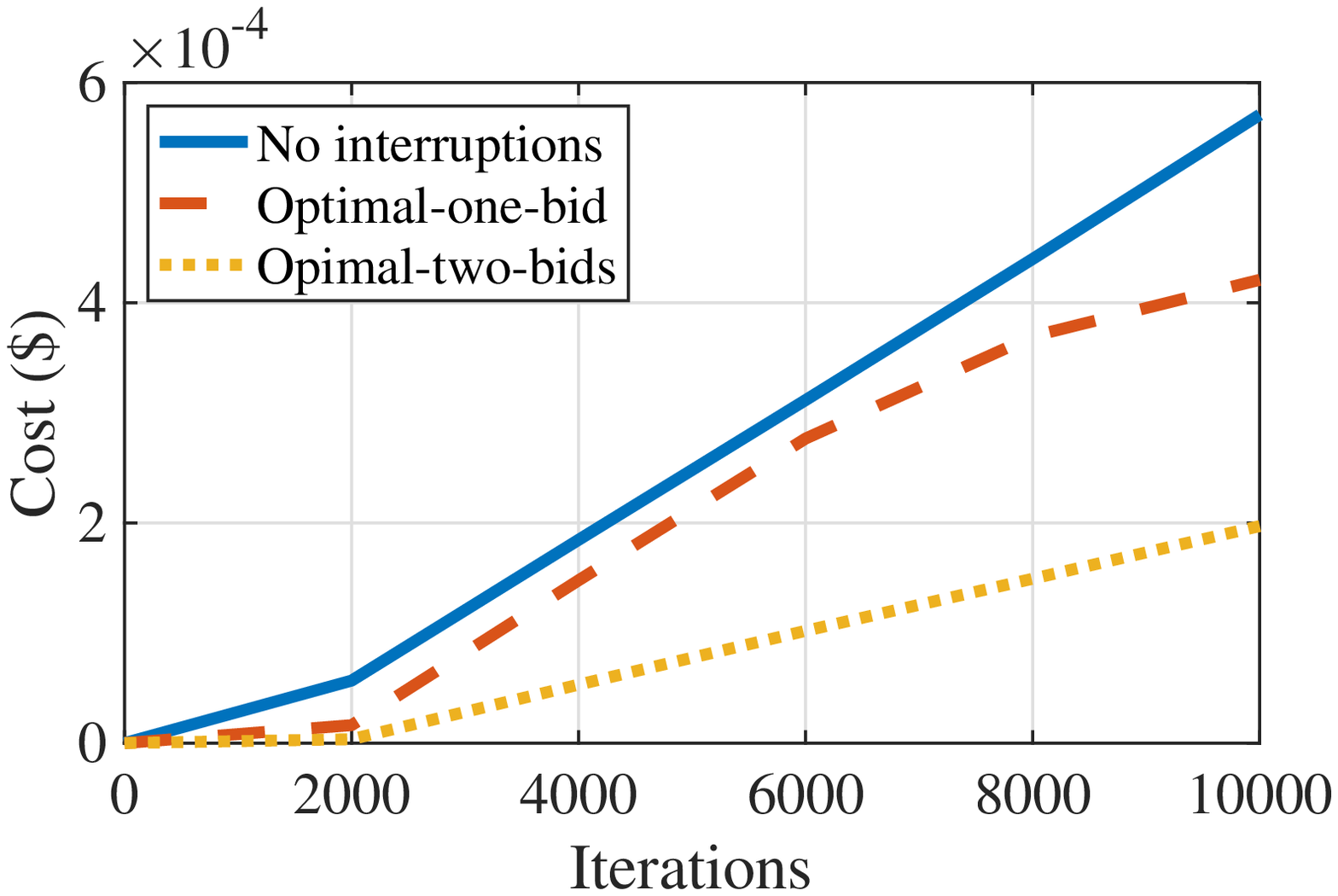}
    \caption{Cost-vs-iterations}
    \label{fig:data_cif_cost_time}
    \end{subfigure}
    \caption{Under historical price traces of the c5x.large spot instances in the region of us-west-2a (Oregon), Optimal-one-bid and Optimal-two-bids can reduce the cost by 26.27\% and 65.46\% respectively compared with No-interruptions (Figure \ref{fig:data_cif_cost_time}) while achieving 96.78\% and 96.46\% of the training accuracy that No-interruptions achieves (Figure  \ref{fig:data_cif_acc_cost}).}
    \label{fig:data_cif}
\end{figure}

\captionsetup[figure]{labelfont=bf}
\begin{figure}
    \centering
    \begin{subfigure}[t]{0.47\linewidth}
    \includegraphics[width=\textwidth]{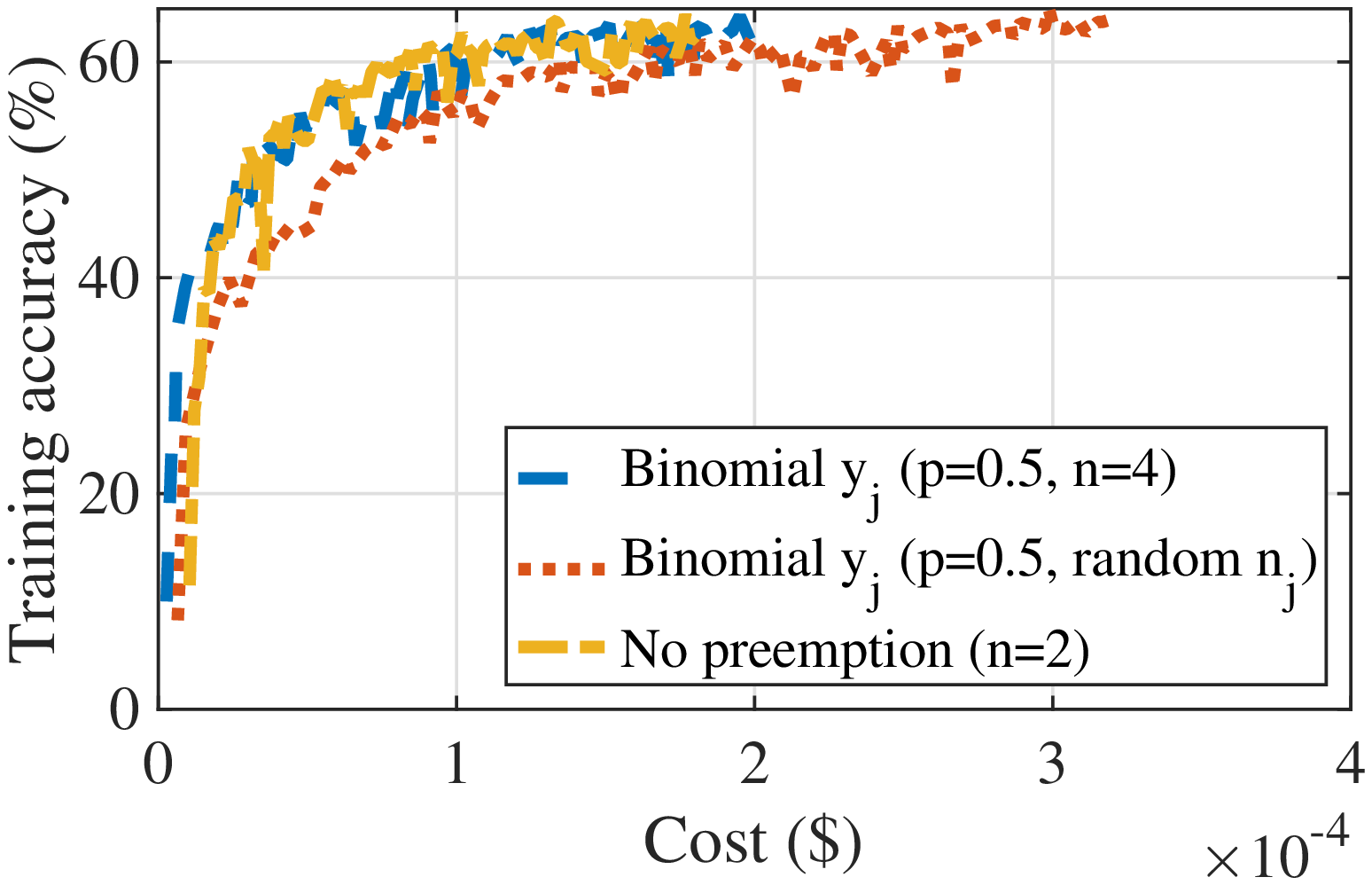}
    \caption{Accuracy-vs-cost varying preemption probability and $n$}
    \label{fig:preemption_distributions}
    \end{subfigure}
    \hspace{0.03cm}
    \begin{subfigure}[t]{0.47\linewidth}
    \includegraphics[width=\textwidth]{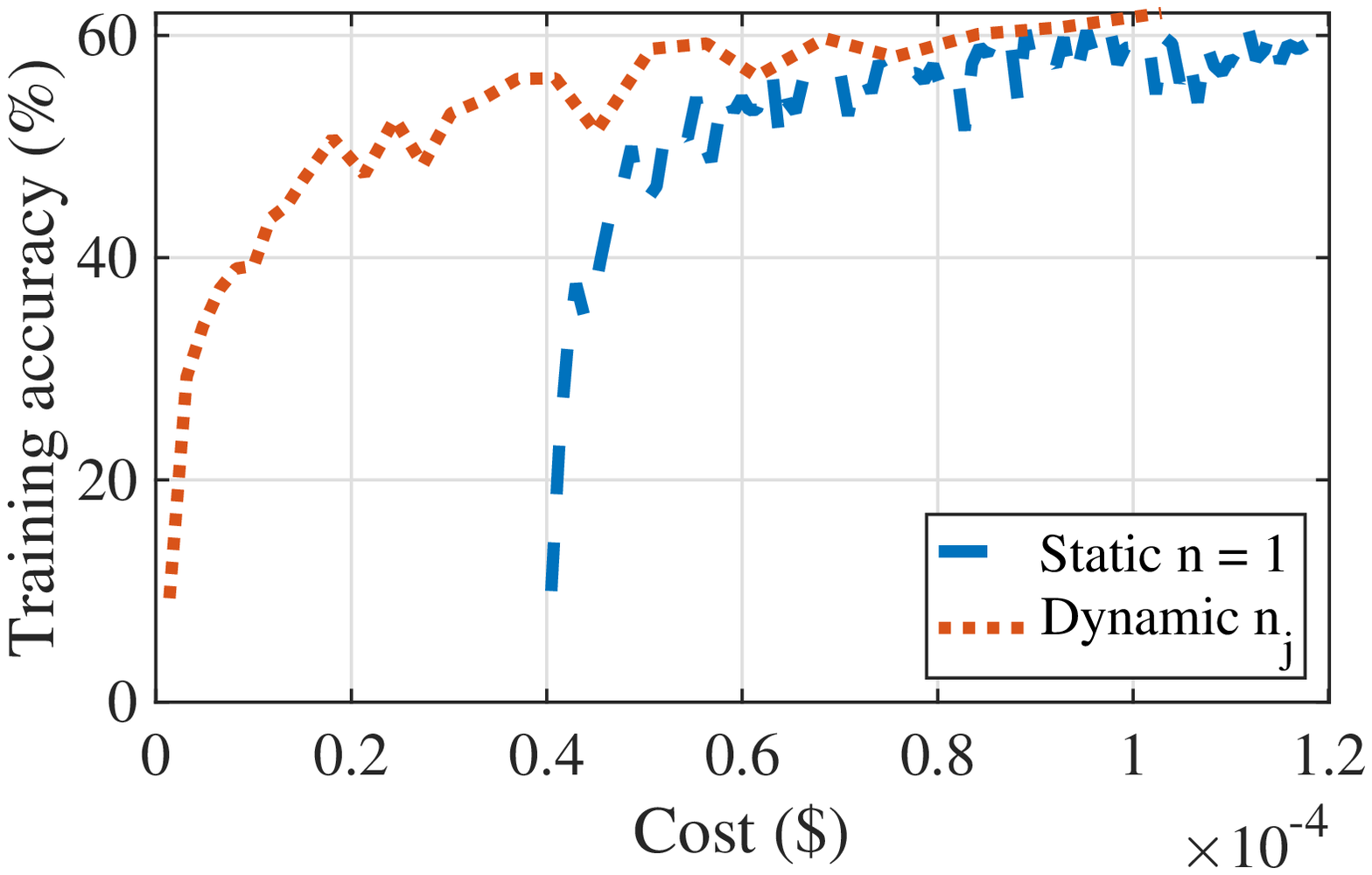}
    \caption{Accuracy-vs-cost: static-vs-dynamic strategies}
    \label{fig:preemption_strategies}
    \end{subfigure}
    \caption{Using $n$ estimated based on Theorem \ref{thm:co-opt_n_J} achieves higher accuracy per dollar than randomly setting $n$ (Figure \ref{fig:preemption_distributions}). Compared with using $1$ worker for $J = 10000$ iterations, dynamically setting $n_j=1.0004^{j-1}$ and the number of iterations according to Theorem \ref{thm:dynamic_nj} with $\chi = 1$ achieves higher accuracy per dollar on EC2 spot instances.}
    \label{fig:my_label}
\end{figure}

\section{Discussion and Conclusion}\label{sec:conclusion}
In this work, we consider the use of volatile workers that run distributed SGD algorithms to train machine learning models. We first focus on Amazon EC2 spot instances, which allow users to reduce job cost at the expense of a longer training time to achieve the same model accuracy. Spot instances allow users to choose how much they are willing to pay for computing resources, thus allowing them to control the trade off between a higher cost and a longer completion time or higher training error. We quantify these tradeoffs and derive new bounds on the training error when using time-variant numbers of workers. We finally use these results to derive optimized bidding strategies for users on spot instances and propose practical strategies for scenarios without controlling the preemption of the instances by submitting bids. We validate these strategies by comparing them to heuristics when training neural network models on the CIFAR-10 image dataset.

Our proposed strategies are an initial step towards a more comprehensive set of methods that allow distributed ML algorithms to exploit the benefits of volatile instances. As a simple extension, one might adapt the bids over time as we obtain better estimates of the iteration running time. Our bidding strategies might also be generalized to allow different bids for each worker. Even more generally, one can envision dividing a resource budget across workers, with the budget controlling each worker's availability. This budget might be a monetary budget when workers are run on cloud instances, but if the workers are instead run on mobile devices, it might instead represent a power budget that controls how often these devices can afford to process data.

\section{Acknowledgments}
This work was supported by NSF grants CNS-1751075, CNS-1909306, CCF-1850029, and a 2018 IBM Faculty Research Award. We also thank Fangjing Wu for her assistance.

\appendix
\begin{proof}[Proof of Theorem~\ref{thm:error-cv}]
$G(\mathbf{w}_{j+1})$ is at most:
\begin{align}\label{eq:bound_Lipschitz}
    G(\mathbf{w}_j) + \nabla G(\mathbf{w}_j)\cdot (\mathbf{w}_{j+1} - \mathbf{w}_j) + \frac{L}{2} || \mathbf{w}_{j+1} - \mathbf{w}_j ||_2^2
\end{align}
due to Assumption \ref{asmp-lipschitz}. Combining \eqref{eq:sync_sgd}, Assumption \ref{asmp:moments}, and \eqref{eq:bound_Lipschitz},
\begin{align}
    &\expect*{G(\mathbf{w}_{j+1}) - G(\mathbf{w}_j)} \nonumber\\
    &\leq - \alpha||\nabla G(\mathbf{w}_j)||_2^2 \left( \mu - \frac{\alpha L M_G}{2}  \right) + \expect*{\frac{\alpha^2LM}{2y_j}}\\
    &\leq - \frac{1}{2}\alpha\mu||\nabla G(\mathbf{w}_j)||_2^2 + \expect*{\frac{\alpha^2LM}{2y_j}} \label{eq:bnd_2},
\end{align}
where \eqref{eq:bnd_2} follows from our choice of $0< \alpha< \frac{\mu}{LM_G}$. 
If $G(\cdot)$ is $c$-strong convex with $c\le L$, then it satisfies the Polyak-Lojasiewicz condition $|| \nabla G(\mathbf{w}_j) ||_2^2 \ge 2c \left( G(\mathbf{w}_j) - G^* \right), \forall \mathbf{w}_j$ (Appendix B of \cite{pl-condition}). Substituting this into \eqref{eq:bnd_2} and subtracting $G^*$ on both sides, we have:
%
%
\begin{align*}
    \expect*{G(\mathbf{w}_{j+1})} &\leq  (1- \alpha c \mu)\left( G(\mathbf{w}_j) - G^*\right) + \expect*{\frac{\alpha^2 L M}{2y_j}}
\end{align*}
Applying the above inequality recursively over all iterations leads to \eqref{eq:error-cv}, and the theorem follows.
\end{proof}

\begin{proof}[Proof of Lemma \ref{lem:single-bid-cost-monotonicity}]
\label{sec:lem:single-bid-cost-monotonicity}
The objective function \eqref{eq:simp-obj-cost} takes the sum of price multipled by the runtime over all $J$ iterations with at least one active worker.
Therefore, we have $\expect*{C}=\frac{J \int_{\ubar{p}}^{b} n \expect*{R(n)} p f(p) \text{d} p}{F(b)}$, which equals $\frac{J n \expect*{R(n)}}{F(b)} \int_{\ubar{p}}^b \left(\left( p F(p) \right)^{\prime} - F(p)\right) \text{d}p$ and thus $\frac{J n \expect*{R(n)}}{F(b)}\left( b F(b) - bF(\ubar{p}) -\int_{\ubar{p}}^b F(b)\text{d}p \right)$.
The lemma follows as $F(\ubar{p}) = 0$. 
%
\end{proof}

\label{sec:thm:identical-opt-bids}
\begin{proof}[Proof of Theorem~\ref{thm:identical-opt-bids}]
Note that $\expect*{C}$ is non-increasing with $b$, the optimal number of iterations equals $\hat{\phi}^{-1}(\epsilon)$, and the expected cost in non-decreasing with $b$, the optimal bid price has $\expect*{\tau} = \theta$.
Setting the right-hand side of \eqref{eq:expected-completion-time} to be equal to $\theta$ and taking $J = \hat{\phi}^{-1}(\epsilon)$, we can conclude that the optimal $b$ should be equal to 
$F^{-1}\left( \frac{\hat{\phi}^{-1}(\epsilon)\expect*{R(n)}}{\theta} \right)$.
\end{proof}

\begin{proof}[Proof of Theorem \ref{thm:co-opt_n_J}]
    Given that $y_j$ is i.i.d.~across all iterations with $\expect*{y_j | y_j > 0}\propto n$, it suffices to minimize $J\cdot n$ subject to $A\beta^J + \frac{B\left(1-\beta^J\right)}{n(1-\beta)} \le \epsilon$. Suppose the $n^*$ is a feasible solution that is not least integer that makes the error constraint tight, \ie, satisfying $A\beta^{J^*} + \frac{B\left(1-\beta^{J^*}\right)}{(n^*-1)(1-\beta)} \le  \epsilon$, there exists a feasible solution $n' = n^* - 1$ such that the objective value $J^*\cdot n'$ is strictly smaller than $J^* \cdot n^*$, a contradiction. Therefore, we can replace the objective function $J\cdot n$ by $\frac{BJ(1-\beta^J)}{(1-\beta)(\epsilon - A\beta^J)}$. Letting its derivative to be zero leads to $\frac{A\beta^{\tilde{J}}\left( \tilde{J}\ln{\frac{1}{\beta}} + 1 - \beta^{\tilde{J}} \right)}{1 + \beta^{\tilde{J}}(\tilde{J}\ln{\frac{1}{\beta}} - 1)} (\text{denoted by}~H(\tilde{J}))= \epsilon$ where $\tilde{J}$ can be fractional. One can verify that $H(\tilde{J})$ monotonically decreases with $\tilde{J}$ and the objective function is smooth. Thus, $J^*$ should be among: the least integer no smaller than $\tilde{J}$, the largest integer no larger than $\tilde{J}$, and $\floor*{\theta \delta}$, whichever that yields the smallest objective value, the theorem follows.
\end{proof}

\begin{proof}[Proof of Theorem \ref{thm:dynamic_nj}]
Based on our Theorem \ref{thm:error-cv}, the error bound of using $\ceil*{n_0\eta^{j-1}}$ workers in iteration $j$ and running the SGD for $J'$ iterations is at most:
\begin{align}
\label{eq:bound_dynamic_nj}
    &(1-\alpha c \mu)^{J'} \expect*{G(\mathbf{w}_0)} 
    + B \sum_{j=1}^{J'} \frac{(1-\alpha c \mu)^{J'-j}}{\left(n_0 \eta^{j-1}\right)^{\chi}}\notag\\
    =& (1-\alpha c \mu)^{J'} \expect*{G(\mathbf{w}_0)} 
    + \frac{B}{n_0^{\chi}} \cdot \sum_{j=1}^{J'} \frac{(1-\alpha c \mu)^{J'-1}}{\left[ \eta^{\chi} (1-\alpha c \mu) \right]^{j-1}} \notag\\
    =& (1-\alpha c \mu)^{J'} \expect*{G(\mathbf{w}_0)} 
    + \frac{B}{n_0^{\chi}} \cdot (1-\alpha c \mu)^{J'-1} \cdot \frac{1 - x^{J'}}{1 - x},
\end{align}
where we define $x=\frac{1}{\eta^{\chi} (1-\alpha c \mu)}$ and $B$ is a constant linear with $\frac{\alpha^2 L M}{2}$. Given our choice of $\eta^{\chi} > (1-\alpha c \mu)^{-1}$, the error bound will exponentially decrease with $J'$. In comparison, if using $n_0$ workers for $J$ iterations, the error is at most:
\begin{align}
\label{eq:bound_static_n}
    (1-\alpha c \mu)^{J} \expect*{G(\mathbf{w}_0)} 
    + \frac{B}{n_0} \cdot \frac{1-(1-\alpha c \mu)^J}{\alpha c \mu}
\end{align}
Based on \eqref{eq:bound_dynamic_nj}, \eqref{eq:bound_static_n}, and our choice of $\eta$, the error decay rate is no smaller than $(1-\alpha c \mu)$ in the dynamic strategy (bound \eqref{eq:bound_dynamic_nj}) and equals $(1-\alpha c \mu)$ in the static strategy. Moreover, when $J \to +\infty$, the error bound of the dynamic strategy approaches $\frac{B\beta^{J' - 1} \left(1 - (\frac{1}{\beta \eta^{\chi}})^{J'}\right)}{n_0^{\chi}\left(1 - \frac{1}{\beta \eta^{\chi}}\right)}$, where $\beta := 1-\alpha c \mu$, while that of the static strategy \eqref{eq:bound_static_n} approaches $\frac{B}{(1 - \beta)n_0}$. Putting $J' = \log_{\eta} \left( 1+ (\eta -1)J\right)$ into the former, it becomes $\frac{B\left[ (\eta^{\chi} + 1)J + 1 \right]^{\log_{\eta^{\chi}} \beta}}{n \beta (1 -\frac{1}{\beta \eta^{\chi}})}$ which is smaller than $\frac{B}{(1 -\beta)n_0}$ (error bound of the static strategy) when $J$ is sufficiently large due to $\log_{\eta^{\chi}} \beta < 0$, the theorem follows.
\end{proof}

\begin{proof}[Proof of Lemma \ref{lem:distributions_preemption}]
For such a uniform $y_j$, we have:
\begin{align*}
    \expect*{\frac{1}{y_j}} = \sum_{k=1}^{n_j} \frac{1}{k} \cdot \frac{1}{n_j} \le \frac{\ln n_j + 1}{n_j} \le O\left(\frac{1}{n_j^{1/2}}\right)
\end{align*}
If each worker is preempted with probability $q$, it suffices to show that for a constant $d>0$, any $q\in [\frac{1}{2}, 1)$, and $\gamma \in (0,1)$, $\left|\expect*{\frac{1}{y_j}} - \expect*{\frac{1}{y_j + 1}}\right| \leq dn^{-\gamma}$ is at most
\begin{align*}
     &\leq \frac{1}{1-q^n}\left(\sum_{y = 1}^{n^{\gamma}} \frac{1}{y(y + 1)}{n\choose y}q^n + \sum_{y = n^{\gamma} + 1}^{n} \frac{1}{y(y + 1)}{n\choose y}q^n \right)\\
     &\leq \frac{1}{1-q^n}\left( n^{\gamma}\left(qn^{n^{\gamma - 1}}\right)^{n} + \frac{1}{n^{2\gamma - 1}}\right) \leq \frac{d}{n^{2\gamma - 1}}
 \end{align*}
 and $\expect*{\frac{1}{y_j + 1}} = \frac{1-q^{n+1}}{(1+n)(1-q)} $ according to \cite{inv_binomial}. The result also holds for $q\in (0, \frac{1}{2})$ by applying the derivation on $1-q$ which is $\in [\frac{1}{2}, 1)$), rather than on $q$, the lemma follows.
\end{proof}

\balance
\bibliographystyle{IEEEtran}
\bibliography{main.bib}

\end{document}